\newcommand{\smallbf}[1]{{\fontsize{7}{12}\selectfont{\textbf{{#1}}}}}
\journalname{}
\begin{document}
\pagenumbering{arabic}
\sloppy
\title{A Novel Measure to Evaluate Generative Adversarial Networks Based on Direct Analysis of Generated Images
}


\author{Shuyue Guan         \and
        Murray Loew 
}


\institute{Shuyue Guan \at
              \textit{Related codes can be found in author’s website linked up with the ORCID: \href{https://orcid.org/0000-0002-3779-9368}{0000-0002-3779-9368}} \\
              \email{frankshuyueguan@gwu.edu}           
           \and
           Murray Loew \at
              \textit{Corresponding author} \\
              \email{loew@gwu.edu} \\
              \\
           Department of Biomedical Engineering, \\
           George Washington University, Washington DC, USA
}

\date{}

\maketitle

\begin{abstract}
The Generative Adversarial Network (GAN) is a state-of-the-art technique in the field of deep learning. A number of recent papers address the theory and applications of GANs in various fields of image processing. Fewer studies, however, have directly evaluated GAN outputs. Those that have been conducted focused on using classification performance, \textit{e.g.}, \textit{Inception Score} (IS) and statistical metrics, \textit{e.g.}, \textit{Fréchet Inception Distance} (FID). Here, we consider a fundamental way to evaluate GANs \ul{by directly analyzing the images they generate}, instead of using them as inputs to other classifiers. We characterize the performance of a GAN as an image generator according to three aspects: 1) Creativity: non-duplication of the real images. 2) Inheritance: generated images should have the same style, which retains key features of the real images. 3) Diversity: generated images are different from each other. A GAN should not generate a few different images repeatedly. Based on the three aspects of ideal GANs, we have designed the \textit{Likeness Score} (LS) to evaluate GAN performance, and have applied it to evaluate several typical GANs. We compared our proposed measure with two commonly used GAN evaluation methods: IS and FID, and four additional measures. Furthermore, we discuss how these evaluations could help us deepen our understanding of GANs and improve their performance.
\keywords{GAN evaluation \and GAN performance measure \and Data separability}
\end{abstract}
\textbf{Declarations} Not applicable.

\begin{figure*}[htbp]
    \centering
    \includegraphics[width=.95\textwidth]{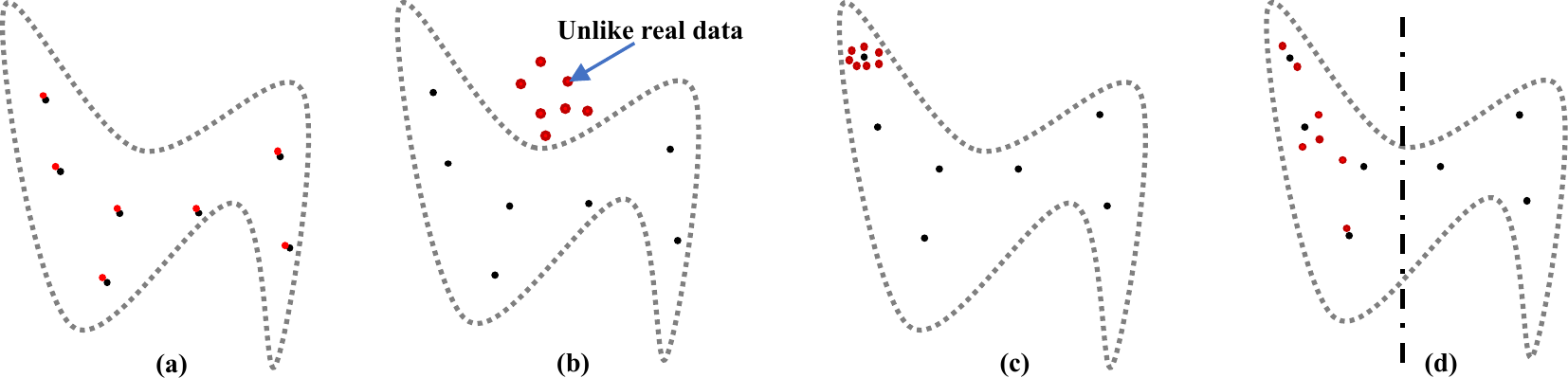}
    \caption{Problems of generated images from the perspective of distribution. The area of dotted line is the distribution of real images. The dark-blue dots are real samples and red dots are generated images. (a) is overfitting, lack of Creativity. (b) is lack of Inheritance. (c) is called mode collapse for GAN and (d) is mode dropping. Both (c) and (d) are examples of lack of Diversity.}
    \label{fig:1}
\end{figure*}

\section{Introduction}
As neural-network based generators, Generative Adversarial Networks (GANs) were introduced by~\citet{Goodfellow2014Generative} in 2014, and they have become a state-of-the-art technique in the field of deep learning~\cite{Hong2019How}. Recently, the number of types of GANs has grown to about 500~\cite{Hindupur2018the-gan-zoo:} and a substantial number of studies are about the theory and applications of GANs in various fields of image processing, including image translation~\cite{Wang2018Perceptual,Yi2017DualGAN:}, object detection~\cite{Li2017Perceptual}, super-resolution~\cite{Ledig2017Photo-Realistic}, image synthesis~\cite{Pan2019Recent} and image blending~\cite{Wu2019GP-GAN:}. Compared to the theoretical progress and applications of GANs, however, fewer studies have focused on evaluating or measuring GANs’ performance~\cite{Borji2019Pros}. Most existing GANs’ measures have been conducted using classification performance (\textit{e.g.}, \textit{Inception Score}) and statistical metrics (\textit{e.g.}, \textit{Fréchet Inception Distance}). A more fundamental alternative approach to evaluate a GAN is to directly analyze the images it generated, instead of using them as inputs to other classifiers (\textit{e.g.}, \textit{Inception} network) and then analyzing the outcomes. 

In this study, we propose a fundamental way to analyze GAN-generated images quantitatively and qualitatively. We briefly introduce the two commonly used GAN evaluation methods: \textit{Inception Score} (IS)~\cite{Salimans2016Improved} and \textit{Fréchet Inception Distance} (FID)~\cite{Heusel2017GANs}, and four additional measures: \textit{1-Nearest Neighbor classifier} (1NNC)~\cite{Lopez-Paz2017Revisiting}, \textit{Mode Score} (MS)~\cite{che2016mode}, \textit{Activation Maximization} (AM) score~\cite{zhou2018activation}, and \textit{Sliced Wasserstein distance} (SWD)~\cite{bonneel_sliced_2015}. We then compare those results with our proposed measure. In addition, we discuss how these evaluations could help us to deepen our understanding of GANs and to improve their performance.

\subsection{GAN Evaluation Metrics}
The optimal GAN for images can generate images that have the same distribution as real samples (used for training), are different from real ones (not duplication), and have variety. Expectations of generated images could be described by three aspects: 1) non-duplication of the real images, 2) generated images should have the same style, which we take to mean that their distribution is close to that of the real images, and 3) generated images are different from each other. Therefore, we evaluate the performance of a GAN as an image generator according to the \ul{three aspects}:
\begin{itemize}
    \item \textbf{Creativity}: non-duplication of the real images. It checks for overfitting by GANs.
    
    \item \textbf{Inheritance} (or visual fidelity): generated images should have the same style, which retains key features of the real (input) images. And this is traded off with the creativity property because generated images should not be too similar nor too dissimilar to the real ones.
    
    \item \textbf{Diversity}: generated images are different from each other. A GAN should not generate a few dissimilar images repeatedly.
\end{itemize}
Fig.~\ref{fig:1} displays four counterexamples of ideal generated images.

We introduce a distance-based separability index and use it to define the measure: \textit{Likeness Score} (LS) to evaluate GAN performance according to the three expectations of ideal generated images. LS offers a direct way to measure difference or similarity between images based on the Euclidean distance and has a simple and uniform framework for the three aspects of ideal GANs and depends less on visual evaluation.

The proposed LS measure is applied to analyze the generated images directly, without using pre-trained classifiers. We applied the measure to outcomes of several typical GANs: DCGAN~\cite{Radford2016Unsupervised}, WGAN-GP~\cite{Gulrajani2017Improved}, SNGAN~\cite{Miyato2018Spectral} LSGAN~\cite{Mao2017Least} and SAGAN~\cite{Zhang2019Self-Attention} on various image datasets. Results show that the LS can reflect the performance of GAN well and are very competitive with other compared measures. In addition, the LS is stable with respect to the number of images and could provide an explanation of results in terms of the three respects of ideal GANs.

\subsection{Related Works}
\label{related}
Recently, the two most widely applied indexes to evaluate GANs performance are the \textit{Inception Score} (IS)~\cite{Salimans2016Improved} and \textit{Fréchet Inception Distance} (FID)~\cite{Heusel2017GANs}. They both depend on the pre-trained Inception~network~\cite{Szegedy2016Rethinking} that was trained on the ImageNet~\cite{Deng2009ImageNet:} dataset.

\subsubsection{KL Divergence Based Evaluations}
From the perspective of the three aspects for ideal GANs, the IS focuses on measuring the inheritance and diversity. Specifically, we let $x\in G$ be a generated image; $y=\texttt{InceptionNet}(x)$ is the label obtained from the pre-trained Inception network by inputting image $x$. For all generated images, we have the label set $Y$. $H(Y)$ defines the diversity ($H(\cdot)$ is entropy) because the variability of labels reflects the variability of images. $H(Y|G)$ could show the inheritance because a good generated image can be well recognized and classified, and thus the entropy of $p(y|x)$ should be small. Therefore, an ideal GAN will maximize $H(Y)$ and minimize $H(Y|G)$. Equivalently, the goal is to maximize:
\[
H(Y)-H(Y|G)=E_G[D_{KL}\left(p(y|x)\|p(y)\right)]
\]
$D_{KL}$ is the Kullback–Leibler (KL) divergence of two distributions~\cite{Kullback1951On}. The IS index is defined:
\[
IS(G)=\exp \left(E_G[D_{KL}\left(p(y|x)\|p(y)\right)]\right)
\]

The IS mainly shows diversity and reflects inheritance to some extent; a larger value of IS indicates that a GAN's performance is better. The substantial limitations of IS are:
\begin{enumerate}
    \item \ul{It depends on classification of images by the Inception network}, which is by trained ImageNet, and employs generated data without exploiting real data. Thus, IS may not be proper to use on other images or non-classification tasks because it cannot properly show the inheritance if the data are different from those used in ImageNet.
    
    \item \ul{Creativity is not considered by the IS} because it ignores the real data. And it has no ability to detect overfitting. For example, if the set of generated images was a copy of the real images and very similar to images of ImageNet, IS will give a high score.
\end{enumerate}

The main drawback of the IS is disregard of real data. Thus, to improve the performance of IS, the \textit{Mode Score} (MS)~\cite{che2016mode} and \textit{Activation Maximization} (AM) score~\cite{zhou2018activation} include real data in their computations. Specifically, we let $z\in R$ be a real image; $y_r=\texttt{InceptionNet}(z)$ is the label obtained from the pre-trained Inception network by inputting the real image $z$. The MS is then defined as:
\begin{multline*}
MS(R, G)= \\ \exp \left(E_G[D_{KL}\left(p(y|x)\|p(y_r)\right)]-D_{KL}\left(p(y)\|p(y_r)\right)\right)    
\end{multline*}
And the AM is defined as:
\[
AM(R, G)= E_G[H\left(y|x\right)]+D_{KL}\left(p(y_r)\|p(y)\right)
\]
Like the IS, larger value of MS is better; but smaller value of AM is better.

\subsubsection{Distance-based Evaluations}
The FID also exploits real data and uses the pre-trained Inception network. Instead of output labels it uses feature vectors from the final pooling layers of the \texttt{InceptionNet}. All real and generated images are input to the network to extract their feature vectors. 

Let $\varphi(\cdot) = \texttt{InceptionNet\_lastPooling}(\cdot)$ be the feature extractor and let $F_r=\varphi(R),\ F_g=\varphi(G)$ be two groups of feature vectors extracted from real and generated image sets. Consider that the distributions of $F_r,\ F_g$ are multivariate Gaussian:
\[
F_r\sim N\left(\mu_r,\Sigma_r\right);\ F_g\sim N\left(\mu_g,\Sigma_g\right)
\]
The difference of two Gaussians is measured by the Fréchet distance:
\begin{multline*}
FID\left(R,G\right)= \\ \Arrowvert\mu_r-\mu_g\Arrowvert_2^2 + Tr\left(\Sigma_r+\Sigma_g-2\left(\Sigma_r \Sigma_g\right)^{\frac{1}{2}}\right)
\end{multline*}

In fact, FID measures the difference between distributions of real and generated images; that agrees with the goal of GAN training -- to minimize the difference between the two distributions. The FID measure, however, depends on the \ul{multivariate Gaussian distribution assumption} of $F_r$ and $F_g$: {$\textstyle F_r\sim N\left(\mu_r,\Sigma_r\right);\ F_g\sim N\left(\mu_g,\Sigma_g\right)$}. The assumption of multivariate Gaussian distributions of feature vectors cannot be always guaranteed because some features may not be Gaussian distributed. And in a high-dimensional space, because of the \textit{curse of dimensionality}, the amount of data may be not large enough to form a multivariate Gaussian distribution (because that requires a large amount of data according to the \textit{Central Limit Theorem}). In addition, as with IS, FID depends on the pre-trained Inception network.

To avoid the Gaussian assumption, we can directly compute the Wasserstein distance~\cite{ruschendorf_wasserstein_1985} between the real data distribution $P_r\sim R$ and the generated data distribution $P_g\sim G$. In fact, the well-known Wasserstein GAN~\cite{pmlr-Wassersteingan} uses this distance to optimize the GAN models. It is very difficult to compute the Wasserstein distance between two distributions in high dimensions by its original definition. In practice, the \textit{Sliced Wasserstein distance} (SWD)~\cite{bonneel_sliced_2015} is applied to approximate the Wasserstein distance between real and generated images. The key idea of SWD is to obtain several random radial projections of data from high dimensions to one-dimensional spaces and compute their 1-D Wasserstein distances, which have simple solutions~\cite{Ramdas2017Wasserstein,noauthor_scipystatswasserstein_distance_nodate}.

Compared to IS and FID, SWD directly uses the real and generated images without auxiliary networks but it requires that the two data sets have the same number of images: $|R|=|G|$. Usually, the amount of real data is smaller than that of generated data (generated data can be an arbitrarily large amount). And the result of SWD is in general different with each application of the algorithm because of its dimensionality reduction by random projections. Thus, we have to take its average values by computing repeatedly.

As with the FID, the Wasserstein distance measures the difference between distributions of real and generated images and a good GAN can minimize the difference between the two distributions. Hence, for FID and SWD, the smaller value is better.

\subsubsection{Other Evaluations}
As illustrated by the FID and SWD, to compare distributions of real and generated data is an important idea for the GAN evaluation. The \textit{Classifier Two-sample Tests} (C2ST)~\cite{lehmann2006testing} is to examine if two samples belong to the same distribution through a selected classification method. Specifically, any two-class classifier can be employed in the C2ST. To create a C2ST without an additional classifier, \citet{Lopez-Paz2017Revisiting} introduced the \textit{1-Nearest Neighbor Classifier}~(1NNC) measure that uses a two-sample test with the 1-Nearest Neighbor~(1-NN) method on real and generated image sets. Similar to SWD, 1NNC examines whether two distributions of real and generated image are identical and it also requires the numbers of real and generated images to be equal.

Suppose $|R|=|G|$, we apply the \textit{Leave one out cross-validation}~(LOOCV) to a 1-NN classifier trained on dataset: $\{R\cup G\}$ with labels ``1'' for $R$ and ``0'' for $G$. For each validation result, the accuracy is either 1 or 0; and the \textit{Leave-one-out} (LOO) accuracy is the final average of all validation results.
\begin{itemize}
    \item LOO accuracy $\approx0.5$ is the optimal situation because the two distributions are very similar.
    \item LOO accuracy $<0.5$, the GAN is overfitting to $R$ because the generated data are very close to the real samples. In an extreme case, if the GAN memorizes every sample in $R$ and then generates them identically, \textit{i.e.}, $G = R$, the accuracy would be $=0$ because every sample from $R$ would have its nearest neighbor from $G$ with zero distance.
    \item LOO accuracy $>0.5$ means the two distributions are different (separable). If they are completely separable, the accuracy would be $=1$. 
\end{itemize}

Compared to IS and FID, the 1NNC is an independent measure without auxiliary pre-trained classifiers. However, the $|R|=|G|$ requirement limits its applications and \ul{the local conditions of distributions} will greatly affect the 1-NN classifier. For 1NNC, 0.5 is the best score. To compare with other scores, we regularize 1NNC by this function:
\begin{equation} \label{eq:r1nnc}
  r(x)=-|2x-1|+1  
\end{equation}
Let r1NNC $=$ r(1NNC). Therefore, for r1NNC, the best score is 1 and the larger value is better.

As reported by~\citet{Borji2019Pros}, many other GAN evaluation measures have been proposed recently. Measures like the \textit{Average Log-likelihood}~\cite{Theis2016note}, \textit{Coverage Metric}~\cite{tolstikhin2017adagan}, and \textit{Maximum Mean Discrepancy} (MMD)~\cite{JMLR:v13:gretton12a} depend on selected kernels. And measures like the \textit{Classification Performance} (\textit{e.g.}, FCN-score)~\cite{isola2017image}, \textit{Boundary Distortion}~\cite{santurkar2018classification}, \textit{Generative Adversarial Metric} (GAM)~\cite{im2016generating}, \textit{Normalized Relative Discriminative Score} (NRDS)~\cite{zhang2018decoupled}, and \textit{Adversarial Accuracy and Divergence}~\cite{YangKBP17} use various types of auxiliary models. Some measures compare real and generated images based on image-level techniques~\cite{snell2017learning,zeng2017statistics}, such as SSIM, PSNR, and filter responses. The idea of the \textit{Geometry Score} (GS)~\cite{khrulkov2018geometry} is similar to our proposed LS in some aspects but \ul{its results are unstable and rely on required parameters}\footnote{In practice, we used the codes provided by its author: \url{https://github.com/KhrulkovV/geometry-score}.}. We will further discuss the GS in this paper.

By considering the complexity of algorithm, efficiency in high dimensions, dependency on models or parameters, the extent of use in GAN study field, and (codes) availability for implementation, we finally chose the IS, FID, r1NNC(C2ST), MS, AM, and SWD from the currently-used quantitative measures to compare with our proposed LS.

\section{Likeness Score}
Like FID, 1NNC, and SWD, to examine how the distributions of real and generated images are close to each other is an effective way to measure GANs because the goal of GAN training is to make generated images have the same distribution as real ones.

Considering a dataset that contains real and generated data, the most difficult situation to separate the two classes (or two types: real and generated data) of data arises when the two classes are scattered and mixed together in the same distribution. In this sense, the \textit{separability} of real and generated data could be a promising measure of the similarity of the two distributions. As the separability increases, the two distributions have more differences. Therefore, we proposed the \textit{Distance-based Separability Index} (DSI) \footnote{More studies about the DSI will appear in other forthcoming publications, which can be found in author’s website linked up with the ORCID: \url{https://orcid.org/0000-0002-3779-9368}.} to analyze how two classes of data are mixed together.

If a dataset contains data from two classes $X$ and $Y$, the most difficult situation for separation of the dataset occurs when the data of different classes will have the same distribution (distributions have the same shape, position, and support, \textit{i.e.}, the same probability density function). Suppose $X$ and $Y$ have $N_x$  and $N_y$ data points, respectively, we can define:

\begin{definition} \label{def:1}
The \textit{Intra-Class Distance} (ICD) set $\{d_x\}$ is a set of distances between any two points in the same class $(X)$, as: $\{d_x\}=\{ \|x_i-x_j\|_2 | x_i,x_j\in X;x_i\neq x_j\}$.
\end{definition}

\begin{corollary} \label{cor:1}
Given $|X|=N_x$, then $|\{d_x\}|=\frac{1}{2}N_x(N_x-~1)$.
\end{corollary}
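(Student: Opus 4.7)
The plan is to treat this as a simple combinatorial counting argument, since the ICD set is essentially indexed by unordered pairs of distinct points in $X$. I would read Definition~\ref{def:1} as specifying one distance per unordered pair $\{x_i, x_j\}$ (distances indexed by pairs, equivalently a multiset), rather than as a genuine set that collapses coincidentally equal distance values; otherwise the stated equality only holds generically, under the assumption that all pairwise distances are distinct. Clarifying this convention is actually the only substantive point of the proof, everything else is enumeration.

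First I would form the Cartesian-product description of the underlying index set,
\[
P = \{(x_i, x_j) \in X \times X : x_i \neq x_j\},
\]
and observe $|P| = N_x(N_x - 1)$, obtained by choosing $x_i$ in $N_x$ ways and $x_j$ in the remaining $N_x - 1$ ways. Next I would use the symmetry of the Euclidean norm, $\|x_i - x_j\|_2 = \|x_j - x_i\|_2$, to define an equivalence relation on $P$ that identifies $(x_i, x_j)$ with $(x_j, x_i)$. Each equivalence class has exactly two elements (since $x_i \neq x_j$ forbids fixed points), so the quotient has size $|P|/2 = \tfrac{1}{2}N_x(N_x - 1)$. Because each class contributes exactly one distance value to $\{d_x\}$, this yields $|\{d_x\}| = \tfrac{1}{2}N_x(N_x - 1)$, equivalently $\binom{N_x}{2}$.

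The main obstacle, if any, is interpretive rather than mathematical: one must agree that $\{d_x\}$ is counted with multiplicity by pairs, so that two different unordered pairs producing numerically equal distances are still counted as two elements. Under that reading, the corollary is immediate from the pair-count $\binom{N_x}{2}$; under the strict set reading, the equality becomes an inequality $|\{d_x\}| \le \tfrac{1}{2}N_x(N_x - 1)$, with equality in general position. I would state the convention explicitly at the start of the proof and then conclude in one line.
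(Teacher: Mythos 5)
Your proof is correct and is exactly the standard counting argument the paper implicitly relies on (the paper states Corollary~\ref{cor:1} without proof, treating it as immediate from $\binom{N_x}{2}$ unordered pairs). Your interpretive caveat that $\{d_x\}$ must be read as a multiset indexed by pairs is also the intended convention, as confirmed by the paper's later use of quantities such as $|\{d_x=d\}|/|\{d_x\}|$ when forming histograms of distances.
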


\begin{definition} \label{def:2}
The \textit{Between-Class Distance} (BCD) set $\{ d_{x,y}\}$ is the set of distances between any two points from different classes $(X \, and \, Y)$, as $\{ d_{x,y} \}=\{ \|x_i-~y_j\|_2 \, |\, x_i\in X;y_j\in Y \}$.
\end{definition}

\begin{corollary} \label{cor:2}
Given $|X|=N_x,|Y|=N_y$, then $|\{d_{x,y} \}|=N_x N_y$.
\end{corollary}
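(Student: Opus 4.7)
The plan is to count the cardinality of $\{d_{x,y}\}$ by exhibiting a natural indexing of its elements by the Cartesian product $X \times Y$, and then invoking the multiplication principle. Specifically, each element of the set is of the form $\|x_i - y_j\|_2$ where the pair $(x_i, y_j)$ ranges over $X \times Y$; since the definition places no restriction beyond $x_i \in X$ and $y_j \in Y$, the index set is precisely $X \times Y$, which has $|X| \cdot |Y| = N_x N_y$ elements. Unlike the ICD situation of Corollary~\ref{cor:1}, there is no $x_i \neq x_j$ exclusion to worry about (the two classes are distinct, so no diagonal needs to be removed), and also no double-counting to correct for by a factor of $\tfrac{1}{2}$, because the roles of $X$ and $Y$ in the definition are asymmetric: the pair $(x_i, y_j)$ cannot be confused with $(y_j, x_i)$ since the second slot must come from $Y$.

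The only conceptual subtlety, and the one place I would be careful, is the interpretation of the curly-brace notation: if $\{d_{x,y}\}$ were read strictly as a set of real numbers, then coincidences of distance values would collapse the count. Consistency with Corollary~\ref{cor:1}, however, makes it clear that the authors intend the multiset of distances indexed by the pairs, and under that convention the claimed identity $|\{d_{x,y}\}| = N_x N_y$ is immediate. So the proof is essentially a one-line appeal to $|X \times Y| = |X|\,|Y|$, with the only real step being the justification that the indexing map $(x_i, y_j) \mapsto \|x_i - y_j\|_2$ is treated as injective by convention rather than genuinely so.
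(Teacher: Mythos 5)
Your proof is correct and matches the (implicit) reasoning behind the paper's corollary: the BCD set is indexed by the Cartesian product $X \times Y$, giving $N_x N_y$ elements, with no diagonal exclusion or symmetry factor as in Corollary~\ref{cor:1}. Your remark that $\{d_{x,y}\}$ must be read as a multiset of distances (so coincident values are not collapsed) is the right convention and is the one the paper clearly intends, given the count in Corollary~\ref{cor:1}.
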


The metric for all distances is Euclidean $(l^2\,\text{norm})$. In prior work, we have made comparisons for several distance metrics including City-block, Chebyshev, Correlation, Cosine, and Mahalanobis; the Euclidean distance performed best. That is, DSI based on Euclidean distance has the best sensitivity to complexity, and thus we selected it. Then, Theorem \ref{thm:1} shows how the ICD and BCD sets are related to the distributions of the two-class data.

\begin{theorem} \label{thm:1}
When $|\{d_x\}|,|\{d_y\}|\to \infty$, if and only if the two classes $X$ and $Y$ have the same distribution, the distributions of the ICD and BCD sets are identical.
\end{theorem}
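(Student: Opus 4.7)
My plan is to interpret the ICD and BCD sets as empirical samples from limiting continuous distributions and then handle the two directions separately. Assume $X$ and $Y$ are drawn i.i.d.\ from densities $p_X$ and $p_Y$ respectively. As $N_x, N_y \to \infty$, the empirical distribution of the ICD set $\{d_x\}$ converges (Glivenko--Cantelli) to the law of $D_X := \|X_1 - X_2\|_2$ where $X_1, X_2$ are i.i.d.\ from $p_X$; similarly, the ICD of $Y$ converges to the law of $D_Y := \|Y_1 - Y_2\|_2$, and the BCD set converges to the law of $D_{XY} := \|X - Y\|_2$ for $X \sim p_X$, $Y \sim p_Y$ independent. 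The theorem then reduces to showing that $D_X$, $D_Y$, and $D_{XY}$ share a common distribution if and only if $p_X = p_Y$.

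The forward direction is immediate from exchangeability: if $p_X = p_Y$, then $(X_1, X_2)$, $(Y_1, Y_2)$, and $(X, Y)$ are all i.i.d.\ pairs drawn from the same product measure, so applying $\|\cdot\|_2$ to their differences yields three identically distributed random variables. This part requires no real machinery beyond the definitions.

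For the reverse direction, I would invoke the theory of energy distance. Identity of the three one-dimensional laws forces equality of their first moments, namely $\mathbb{E}[D_X] = \mathbb{E}[D_Y] = \mathbb{E}[D_{XY}]$. The energy distance between $p_X$ and $p_Y$ is defined as
\[
\mathcal{E}(p_X, p_Y) := 2\,\mathbb{E}\|X - Y\|_2 - \mathbb{E}\|X_1 - X_2\|_2 - \mathbb{E}\|Y_1 - Y_2\|_2,
\]
which vanishes under our hypothesis. By the Sz\'ekely--Rizzo characterization theorem for the Euclidean energy distance on $\mathbb{R}^d$, $\mathcal{E}(p_X, p_Y) = 0$ holds only when $p_X = p_Y$, completing the argument.

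The main obstacle is precisely this reverse direction. Equality of the \emph{marginal} distributions of pairwise distances is a much weaker condition than equality of the \emph{joint} distributions of the underlying pairs, so a na\"ive characteristic-function argument applied to $X_1 - X_2$ versus $X - Y$ does not go through: the map from a random vector to the distribution of its norm discards all directional information. The energy-distance route sidesteps this by extracting only first moments, but it hides the non-trivial fact that the Euclidean norm is a strictly conditionally negative-definite kernel on $\mathbb{R}^d$ (Schoenberg's theorem), which is what ultimately powers the implication. A self-contained alternative would be a Fourier argument showing that, together, the three radial distributions determine the characteristic function of the signed measure $p_X - p_Y$ and force it to be identically zero; this is cleaner in principle but heavier in execution than simply appealing to the energy-distance characterization.
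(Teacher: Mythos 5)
Your proposal is correct (modulo a finite-first-moment assumption, which the paper's bounded-support setting supplies) but it follows a genuinely different route from the paper's, most notably in the necessary direction. The paper argues throughout by discretization: it partitions the common support into tiny cells $\Delta_i$, observes that within-cell distances are all $\approx\delta$ and between-cell distances are all $\approx D_{ij}$, and compares the \emph{proportions} of each distance value in the ICD and BCD sets via the counts from Corollaries~\ref{cor:1} and~\ref{cor:2}; sufficiency follows because these proportions agree as $N_x\to\infty$, and necessity is proved by contradiction, showing that identical proportions of the near-zero distances force every local density ratio $\alpha_i$ to equal the global ratio $\alpha$ (via Lemma~\ref{lma:1}). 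Your forward direction replaces the cell-counting with the cleaner exchangeability observation (which the paper states only informally after the theorem), and your reverse direction is a different argument altogether: you extract only the \emph{first moments} of the three limiting distance laws, note that their equality makes the energy distance $\mathcal{E}(p_X,p_Y)=2\,\mathbb{E}\|X-Y\|-\mathbb{E}\|X_1-X_2\|-\mathbb{E}\|Y_1-Y_2\|$ vanish, and invoke the Sz\'ekely--Rizzo characterization (resting on strict conditional negative definiteness of the Euclidean norm) to conclude $p_X=p_Y$. Your route buys rigor and economy --- it avoids the paper's informal limiting manipulations of cell counts and in fact proves a strictly stronger statement, since only equality of means of the ICD and BCD laws is needed for necessity, not equality of the full distributions --- at the cost of importing a nontrivial external theorem. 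The paper's route buys self-containedness and, incidentally, the cell-level picture of near-zero ICD/BCD peaks that the authors later reuse to interpret creativity and diversity; its weak point is the step asserting that $\sum_i n_{xi}^2=\sum_i\rho_i n_{xi}^2$ forces $\rho_i=1$ "since $n_{xi}$ could be any value," which as justified is not airtight (though combining Eqs.~\ref{eq:4} and~\ref{eq:5} as a zero-variance statement for the $\alpha_i$ under weights $n_{xi}^2$ does repair it). Both arguments stand; yours is the more defensible one as written.
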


The full proof of Theorem \ref{thm:1} is shown in Appendix \ref{sec:proof}. Here we provide an informal explanation: points in $X$ and $Y$ having the same distribution can be considered to have been sampled from one distribution $Z$. Hence, both ICDs of $X$ and $Y$, and BCDs between $X$ and $Y$ are actually ICDs of $Z$. Consequently, the distributions of ICDs and BCDs are identical. In other words, that the distributions of the ICD and BCD sets are identical indicates all labels are assigned randomly and thus, the dataset has the least separability. And if the distributions of the ICD and BCD sets are nearly identical, we see that their histograms are almost overlapped. That is, for a distance $d$:
\[
\frac{|\{d_x=d\}|}{|\{d_x\}|} \approx \frac{|\{d_y=d\}|}{|\{d_y\}|} \approx \frac{|\{d_{x,y}=d\}|}{|\{d_{x,y} \}|}
\]

The time costs for computing the ICD and BCD sets increase linearly with the number of dimensions and quadratically with the amount of data. In practice, the time costs could be greatly reduced by using parallel computing.

\subsection{Computation of DSI for GANs Evaluation}
\label{ls4gan}
Since for GANs' evaluation, there are only \ul{two classes}: the real image set $R$ and generated image set $G$, we have two ICD sets and one BCD set. In fact, the DSI can be applied in a multi-class scenario by one-\textit{versus}-others; the process is shown in Appendix \ref{sec:multiclass}. Here we focus on the computation of DSI for GANs' evaluation (two-class scenario).

\textbf{First}, the ICD sets of $R$ and $G$: $\{d_r\},\{d_g\}$ and the BCD set: $\{d_{r,g}\}$ are computed by their definitions (Def.~\ref{def:1}~and~\ref{def:2}). 

\textbf{Second}, to examine the similarity of the distributions of the ICD and BCD sets, we apply the Kolmogorov–Smirnov (KS) distance \citep{Kolmogorov1951}: 
\[
s_r=KS(\{d_r\},\{d_{r,g}\}) \text{, and } s_g=KS(\{d_g\},\{d_{r,g}\}). 
\]
The result of a two-sample KS distance\footnote{In experiments, we used the \texttt{scipy.stats.ks\_2samp} from the SciPy package in Python to compute the KS distance. \url{https://docs.scipy.org/doc/scipy/reference/generated/scipy.stats.ks_2samp.html}} is the maximum distance between two cumulative distribution functions (CDFs):
\[
KS(P,Q)=\sup_{x} |P(x)-Q(x)|
\]
Where $P$ and $Q$ are the respective CDFs of the two distributions $p$ and $q$. Although there are other statistical measures to compare two distributions, such as Bhattacharyya distance, Kullback–Leibler divergence, and Jensen–Shannon divergence, most of them require the two sets to have the same number of samples. It is easy to show that the $|\{d_r\}|,|\{d_g\}|$ and $|\{d_{r,g}\}|$ cannot be the same. The Wasserstein distance is also a potentially suitable measure but we found that it is not as sensitive as the KS distance.

\textbf{Finally}, the DSI for GANs evaluation is the maximum of two KS distances: 
\[
DSI(\{R,G\})=\max\{s_r,\ s_g\},
\]
because the maximum value can highlight the difference between ICD and BCD sets. The similarity of the distributions of the ICD sets: $KS(\{d_r\},\{d_g\})$ is not used because it shows only the difference of distribution shapes, not their location information. For example, two distributions that have the same shape but no overlap will have \ul{zero} KS distance between their ICD sets: $KS(\{d_r\},\{d_g\})=0$.

Fig.~\ref{fig:2} displays artificial 2D examples of generated data (orange points; blue points are real data) that respectively lack creativity, diversity, and inheritance. With respect to the ICD and BCD sets, if the generated data overfit the real data (lack of creativity), peaks will appear in the distribution of BCD near zero (see Fig.~\ref{fig:2}a) because there are many generated points that are close to real data points in their distribution space; hence, many BCD are close to zero. Similarly, lack of diversity implies that many generated data points are close to each other; thus, many ICD values are close to zero and peaks will appear in the distribution of ICD near zero (see Fig.~\ref{fig:2}b). Lack of inheritance is shown by the difference between the distributions of ICD and BCD (see Fig.~\ref{fig:2}c) because if and only if the two classes (real data and generated data) have the same distribution, the distributions of ICD and BCD sets are identical. In that case, there is neither lack of creativity nor lack of diversity. This is because there will be \ul{no single peaks of ICD or BCD near zero}. Therefore, the DSI well evaluates the GAN's performance by measuring creativity, diversity, and inheritance.

DSI ranges from 0 to 1; a small DSI (low separability) means that the ICD and BCD sets are very similar, and by Theorem \ref{thm:1}, the distributions of real and generated data are similar too. Hence, the GAN performs well. \ul{To be consistent with other comparison measures, we complement its value and define the \textit{Likeness Score} (LS)}:
\[
\text{LS}=1-\text{DSI},
\]
which is closer to 1 if the GAN performs better.

\begin{figure}[t]
    \includegraphics[width=.48\textwidth]{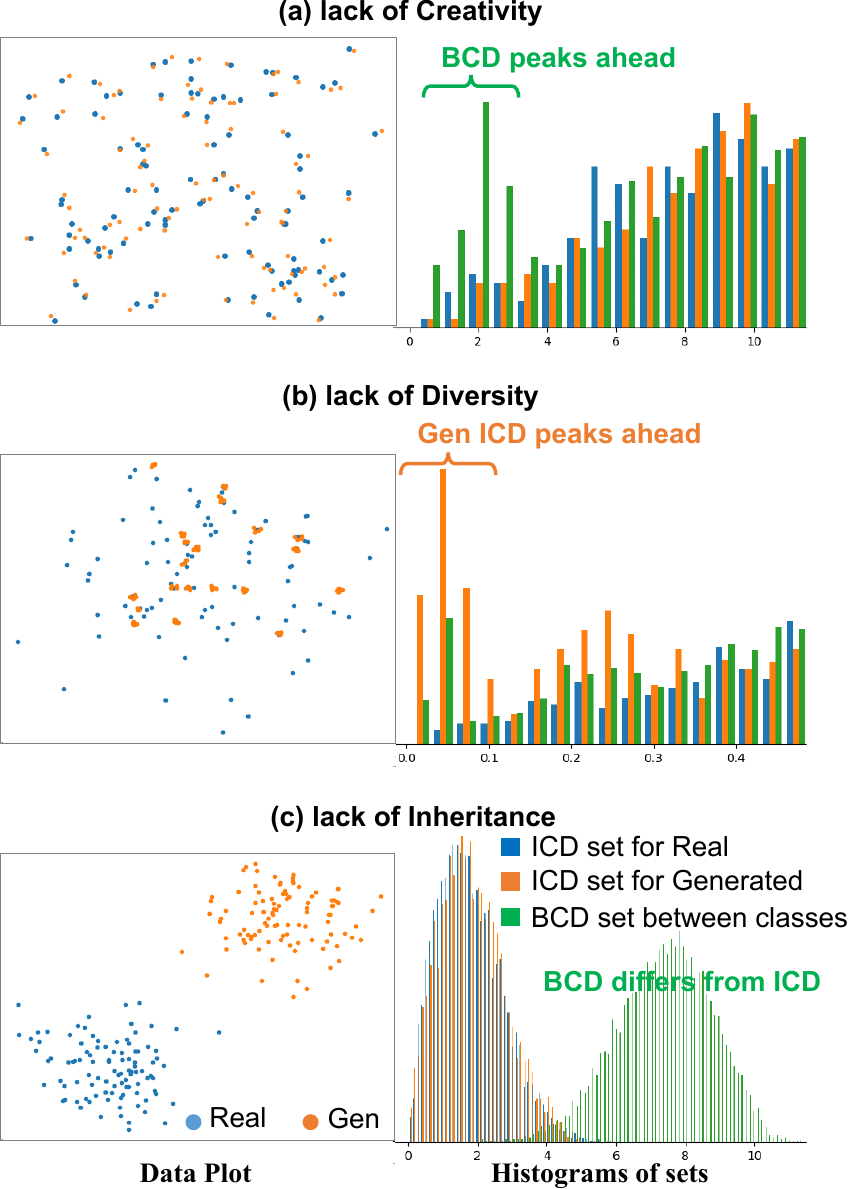}
    \caption{Lack of Creativity, Diversity, and Inheritance in 2D. Histograms of (a) and (b) are zoomed to ranges near zero; (c) has the entire histogram.}
    \label{fig:2}
\end{figure}

\section{Experiments \& Results}
The first experiment has two purposes: one is to test the stability of the proposed measure, \textit{i.e.}, how little the results change when different amounts of data are used. Another purpose is to find the minimum amount of data required for the following experiments because a GAN could generate unlimited data and we wish to bound it to make computation practicable.

The following experiments compare our measure LS with the commonly used measures: IS and FID, and other selected measures. The purpose is not to show which GAN is better but to show how the results (values) of our measure compare to those of existing measures.

\subsection{One Image Type by DCGAN}

\begin{table}[h]
\caption{Measure values for different numbers of generated images}
\label{tab:1}
\resizebox{.48\textwidth}{!}{
\begin{threeparttable}
\begin{tabular}{cc:ccccccc}
    \hline\noalign{\smallskip}
    \textbf{\#} & \textbf{LS} & \textbf{IS} & \textbf{FID} & \textbf{r1NNC$^\dagger$} & \textbf{MS} & \textbf{AM} & \textbf{SWD} & \textbf{GS} \\
    \noalign{\smallskip}\hline\noalign{\smallskip}
    120 & 0.613 & 1.435 & 148.527 & 0.850 & 0.791 & 456.660 & 717.471 & 0.311 \\
    240 & 0.644 & 1.424 & 134.484 & 0.858 & 0.809 & 456.119 & 673.341 & 0.757 \\
    480 & 0.636 & 1.409 & 135.317 & 0.821 & 0.834 & 451.786 & 668.462 & 1.074 \\
    960 & 0.622 & 1.447 & 145.142 & 0.833 & 0.852 & 451.338 & 667.519 & 0.908 \\
    1200 & 0.630 & 1.426 & 141.818 & 0.862 & 0.827 & 454.656 & 675.751 & 1.000 \\
    2400 & 0.628 & 1.431 & 146.109 & 0.850 & 0.844 & 452.077 & 685.621 & 0.454 \\
    4800 & 0.622 & 1.440 & 145.109 & 0.851 & 0.842 & 451.255 & 678.986 & 0.526 \\
    \noalign{\smallskip}\hline
\end{tabular}
    \begin{tablenotes}
      \item Dashed line: to the left are our proposed measures; to the right are compared measures.
      \item $^\dagger$ r1NNC is the regularized 1NNC, defined by Eq. \ref{eq:r1nnc}.
    \end{tablenotes}
\end{threeparttable}}
\end{table}

To test the proposed measures, in the first experiment, we used one type of image (Plastics; 12 images) from the USPtex database~\cite{Backes2012Color} to train a DCGAN. Then, the trained GAN generated several groups containing different amounts of synthetic images. Finally, we compute results of our proposed measure (LS), IS, FID, r1NNC, MS, AM, SWD and GS by using these generated images and 12 real images; the results are shown in Table~\ref{tab:1}.

Computations of FID, r1NNC and SWD require that the two image sets have the same number of images. We divided the generated images into many 12-image subsets to compute the scores with 12 real images and then obtained their average values. Fig.~\ref{fig:3} shows the plots of these scores. To fit the axes, the values of FID, AM, and SWD are scaled by 0.01, 0.001, and 0.001, respectively. The result indicates that the scores \ul{except the GS}, are stable to different numbers of testing images, especially when the amount is greater than 1000. We remove the GS from further comparisons because its results are highly unstable with the amount of data.

\begin{figure}[t]
    \includegraphics[width=.48\textwidth]{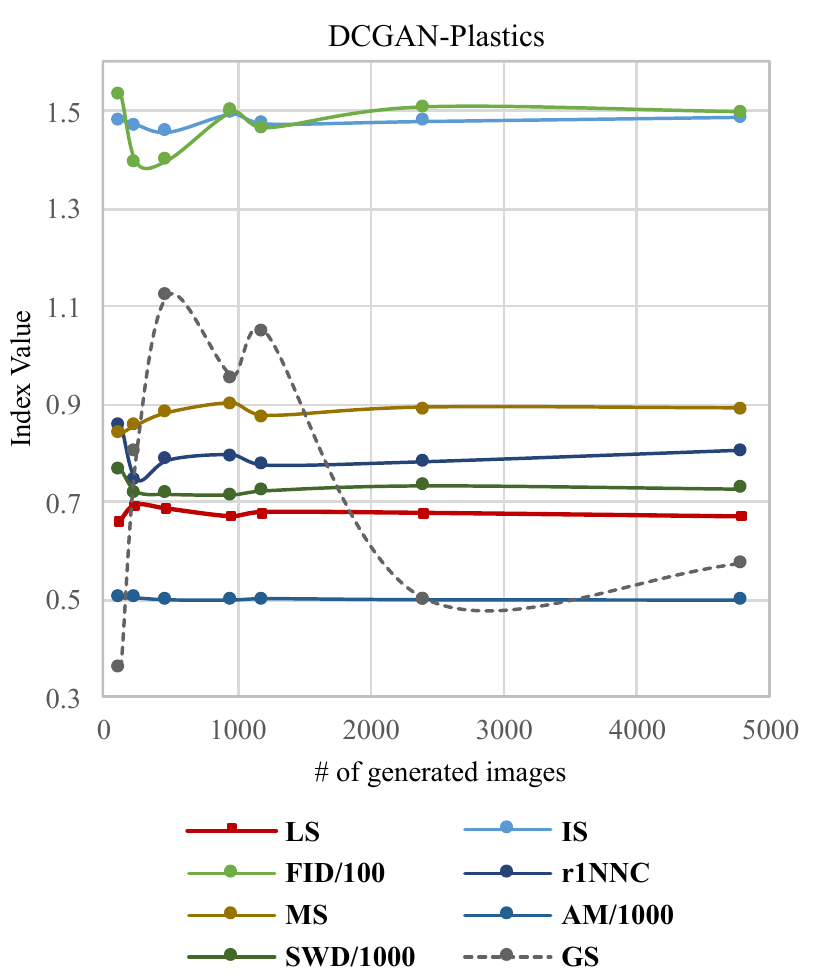}
    \caption{Plots of values in Table~\ref{tab:1}.}
    \label{fig:3}
\end{figure}

\subsection{Four Image Types and Three GANs}
In the second experiment, four types of image (Holes, Small leaves, Big leaves, and Plastics; 12 images for each type) are used to train three GANs (DCGAN, WGAN-GP, and SNGAN). Then, the trained GANs generated 1,200 synthetic images for each type. Twelve sets of synthetic images were generated; Fig.~\ref{fig:4} shows samples from 4 real image sets and 12 generated image sets. Visual examination of these synthetic images indicates that the DCGAN seems to give the most images similar to the real ones, but many of its generated images are duplications of real ones. Thus, the DCGAN overfitted the training data. The SNGAN’s generated images are most dissimilar from real images; they lack the inheritance feature. The WGAN-GP well balanced the creativity and inheritance features.

\begin{figure}[h]
\begin{tabular}{p{0.5em}@{\hspace{0.5em}}c@{\hspace{0.5em}}c@{\hspace{0.5em}}c@{\hspace{0.5em}}c}
  & \smallbf{Real} & \smallbf{DCGAN} & \smallbf{WGAN-GP} & \smallbf{SNGAN} \\ 
  \rotatebox{90}{\smallbf{\hspace{1.3em} Hole}} & \includegraphics[width=.21\linewidth]{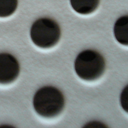} & \includegraphics[width=.21\linewidth]{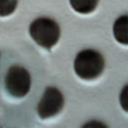} & \includegraphics[width=.21\linewidth]{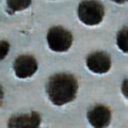} & \includegraphics[width=.21\linewidth]{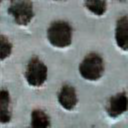}
  \\ 
  \rotatebox{90}{\smallbf{\hspace{0.2em} Small leaf}} & \includegraphics[width=.21\linewidth]{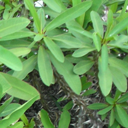} & \includegraphics[width=.21\linewidth]{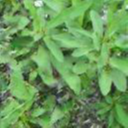} & \includegraphics[width=.21\linewidth]{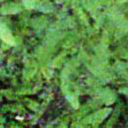} & \includegraphics[width=.21\linewidth]{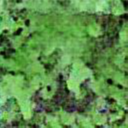}
 \\ 
   \rotatebox{90}{\smallbf{\hspace{0.5em} Big leaf}} & \includegraphics[width=.21\linewidth]{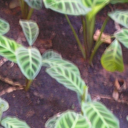} & \includegraphics[width=.21\linewidth]{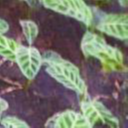} & \includegraphics[width=.21\linewidth]{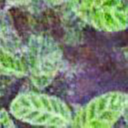} & \includegraphics[width=.21\linewidth]{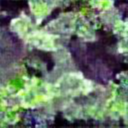}
 \\ 
   \rotatebox{90}{\smallbf{\hspace{0.8em} Plastic}} & \includegraphics[width=.21\linewidth]{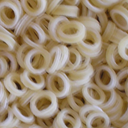} & \includegraphics[width=.21\linewidth]{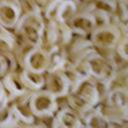} & \includegraphics[width=.21\linewidth]{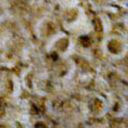} & \includegraphics[width=.21\linewidth]{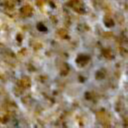}
\end{tabular}
    \caption{Column 1: samples from four types of real images; column 2-4: samples from synthetic images of three GANs trained by the four types of images.}
    \label{fig:4}
\end{figure}

We applied these measures on the 12 generated image sets; results are shown in Table~\ref{tab:2}. Fig.~\ref{fig:5} shows plots of results. To emphasize the rank of each score for different generators and image types, values are normalized and ranked from 0 to 1 by columns for plotting; 0 is for the worst (model) performance and 1 is for the best (model) performance. Table~\ref{tab:3} averaged scores by GAN models. To compare the three GANs, Table~\ref{tab:3} shows summarized results and Fig.~\ref{fig:5} gives more details. In general, the absolute values of measures are not significant but their ranks matter because for infinite-range measures, such as IS, FID, and SWD, their values highly depend on the input data. Therefore, little importance should be attached to their differences.

\begin{table}[h]
\caption{Measure results}
\label{tab:2}
\resizebox{.48\textwidth}{!}{
\begin{threeparttable}
\begin{tabular}{lc:cccccc}
    \hline\noalign{\smallskip}
    \textbf{*} & \textbf{LS} & \textbf{IS} & \textbf{FID$^\downarrow$} & \textbf{r1NNC} & \textbf{MS} & \textbf{AM$^\downarrow$} & \textbf{SWD$^\downarrow$} \\
    \noalign{\smallskip}\hline\noalign{\smallskip}
    DC-h & 0.747 & 1.222 & 102.805 & 0.892 & 0.866 & 407.841 & 862.241 \\
    DC-sl & 0.611 & 1.171 & 155.973 & 0.858 & 0.934 & 511.218 & 944.228 \\
    DC-bl & 0.262 & 1.321 & 172.296 & 1.000 & 0.573 & 509.649 & 1053.687 \\
    DC-pla & 0.630 & 1.426 & 141.818 & 0.908 & 0.827 & 454.656 & 678.210 \\ \hdashline[.5pt/1pt]
    W-h & 0.771 & 1.163 & 233.277 & 0.958 & 0.671 & 607.249 & 604.263 \\
    W-sl & 0.465 & 1.369 & 400.036 & 0.983 & 0.155 & 726.232 & 702.976 \\
    W-bl & 0.626 & 1.536 & 375.987 & 0.975 & 0.117 & 779.834 & 650.157 \\
    W-pla & 0.441 & 1.555 & 513.268 & 0.792 & 0.026 & 1108.549 & 732.241 \\ \hdashline[.5pt/1pt]
    SN-h & 0.594 & 1.317 & 252.857 & 1.000 & 0.467 & 570.819 & 778.487 \\
    SN-sl & 0.025 & 1.105 & 469.795 & 0.133 & 0.158 & 879.136 & 1110.309 \\
    SN-bl & 0.000 & 1.083 & 456.813 & 0.195 & 0.077 & 1086.094 & 1221.202 \\
    SN-pla & 0.000 & 1.037 & 485.716 & 0.000 & 0.032 & 1399.649 & 1229.506 \\
    \noalign{\smallskip}\hline
\end{tabular}
    \begin{tablenotes}
      \item *Generator models: DC: DCGAN, W: WGAN-GP, SN: SNGAN. Generated image types: h: hole, sl: small leaf, bl: big leaf, pla: plastic.
      \item Dashed line: to the left are our proposed measures; to the right are the compared measures.
      \item $^\downarrow$ Measures with this symbol mean smaller score is better; otherwise, larger score is better.
    \end{tablenotes}
\end{threeparttable}}
\end{table}

\begin{figure}[t]
    \includegraphics[width=.48\textwidth]{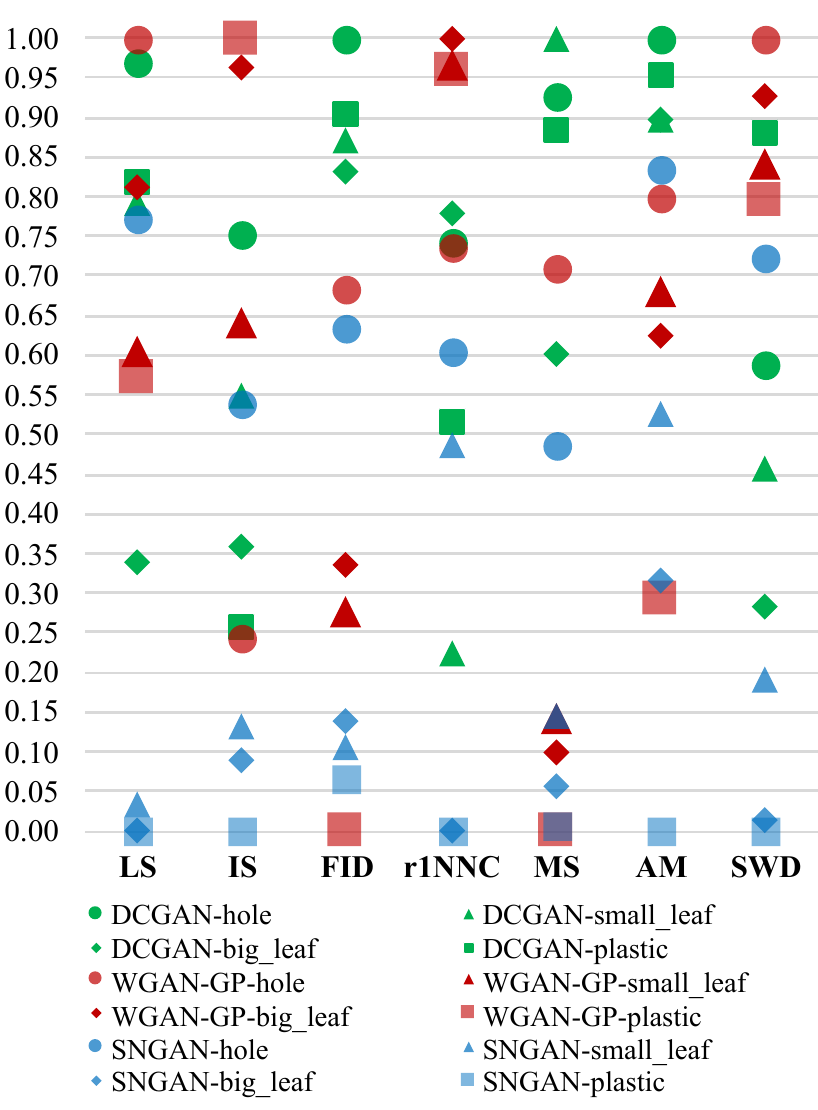}
    \caption{Normalized and ranked scores. X-axis shows scores and y-axis shows their normalized values; 0 is for the worst (model) performance and 1 is for the best (model) performance. Colors are for generators and shapes are for image types; see details in legend.}
    \label{fig:5}
\end{figure}

\begin{table}[h]
\caption{Measure results averaged by generators}
\label{tab:3}
\resizebox{.48\textwidth}{!}{
\begin{threeparttable}
\begin{tabular}{lc:cccccc}
    \hline\noalign{\smallskip}
    \textbf{Model} & \textbf{LS} & \textbf{IS} & \textbf{FID$^\downarrow$} & \textbf{r1NNC} & \textbf{MS} & \textbf{AM$^\downarrow$} & \textbf{SWD$^\downarrow$} \\
    \noalign{\smallskip}\hline\noalign{\smallskip}
    DCGAN & 0.562 & 1.285 & \textbf{143.223} & 0.915 & \textbf{0.800} & \textbf{470.841} & 884.592 \\
    WGAN-GP & \textbf{0.576} & \textbf{1.406} & 380.642 & \textbf{0.927} & 0.242 & 805.466 & \textbf{672.409} \\
    SNGAN & \underline{0.155} & \underline{1.135} & \underline{416.295} & \underline{0.332} & \underline{0.184} & \underline{983.924} & \underline{1084.876} \\
    \noalign{\smallskip}\hline
\end{tabular}
    \begin{tablenotes}
      \item \textbf{Bold value}: the best model by the measure of this column.
      \item \underline{Underline}: the worst model by the measure of this column.
      \item Dashed line: to the left are our proposed measures; to the right are the compared measures.
      \item $^\downarrow$ Measures with this symbol mean smaller score is better; otherwise, larger score is better.
    \end{tablenotes}
\end{threeparttable}}
\end{table}

For the best generator, the proposed LS agrees with IS, 1NNC, SWD, and the visual appearance of generated images. Since the DCGAN overfitted to training data, it lacks creativity, but FID, MS, and AM rank it as the best model. All measures including the LS rank SNGAN as the worst because it \ul{lacks diversity}. Especially, for the SNGAN-\texttt{big leaf} and SNGAN-\texttt{plastic} whose LS values are zero (in Table~\ref{tab:2}), almost all images are the same (but different from real ones).

\subsection{Five GANs on CIFAR-10}
\label{cifar-10}
\begin{table}[h]
\caption{Measure results on CIFAR-10}
\label{tab:4}
\resizebox{.48\textwidth}{!}{
\begin{threeparttable}
\begin{tabular}{lc:cccccc}
    \hline\noalign{\smallskip}
    \textbf{Model} & \textbf{LS} & \textbf{IS} & \textbf{FID$^\downarrow$} & \textbf{r1NNC} & \textbf{MS} & \textbf{AM$^\downarrow$} & \textbf{SWD$^\downarrow$} \\
    \noalign{\smallskip}\hline\noalign{\smallskip}
    DCGAN & 0.833 & \textbf{4.311} & 147.110 & 0.772 & \textbf{1.878} & \textbf{335.879} & 710.993 \\
    WGAN-GP & \textbf{0.957} & 3.408 & \textbf{136.121} & \textbf{0.932} & 1.483 & 507.374 & \textbf{276.189} \\
    SNGAN & \underline{0.593} & \underline{2.049} & \underline{219.762} & \underline{0.534} & 0.860 & \underline{631.807} & \underline{743.679} \\
    LSGAN & 0.745 & 3.405 & 136.132 & 0.716 & 1.337 & 450.250 & 710.747 \\
    SAGAN & 0.688 & 2.075 & 206.046 & 0.545 & \underline{0.814} & 611.706 & 595.761 \\
    \noalign{\smallskip}\hline
\end{tabular}
    \begin{tablenotes}
      \item \textbf{Bold value}: the best model by the measure of this column.
      \item \underline{Underline}: the worst model by the measure of this column.
      \item Dashed line: to the left are our proposed measures; to the right are the compared measures.
      \item $^\downarrow$ Measures with this symbol mean smaller score is better; otherwise, larger score is better.
    \end{tablenotes}
\end{threeparttable}}
\end{table}

In the third experiment, we used the CIFAR-10 dataset that is widely used in machine learning to train more types of GANs (DCGAN, WGAN-GP, SNGAN, LSGAN, and SAGAN). A 2,000-image subset had been chosen randomly from the training set of CIFAR-10 to train the five GANs. Five sets of synthetic images were generated; Fig.~\ref{fig:6} shows samples from the original 2,000-image subset and five generated image sets.

Then, each trained GAN generated 2,000 synthetic images and we applied the LS, and other six measures to the five generated image sets and the original 2,000-image subset. Results are shown in Table~\ref{tab:4}. LS agrees with FID, 1NNC, and SWD that WGAN-GP is the best GAN model but IS, MS, and AM rank DCGAN as the best model. For the worst model, LS agrees with all the other measures except the MS. MS shows the SAGAN performs worst but the MS scores of SAGAN and SNGAN are small and close.

\begin{figure}[h]
\begin{tabular}{c@{\hspace{0.5em}}c@{\hspace{0.5em}}c@{\hspace{0.5em}}c@{\hspace{0.5em}}c@{\hspace{0.5em}}c}
   \smallbf{Real} & \smallbf{DC} & \smallbf{W} & \smallbf{SN} & \smallbf{LS} & \smallbf{SA} \\ 
  \includegraphics[width=.14\linewidth]{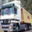} & \includegraphics[width=.14\linewidth]{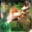} & \includegraphics[width=.14\linewidth]{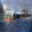} & \includegraphics[width=.14\linewidth]{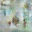} & \includegraphics[width=.14\linewidth]{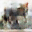} &
  \includegraphics[width=.14\linewidth]{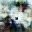} \\
  
    \includegraphics[width=.14\linewidth]{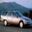} & \includegraphics[width=.14\linewidth]{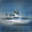} & \includegraphics[width=.14\linewidth]{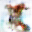} & \includegraphics[width=.14\linewidth]{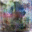} & \includegraphics[width=.14\linewidth]{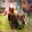} &
  \includegraphics[width=.14\linewidth]{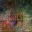} \\
  
    \includegraphics[width=.14\linewidth]{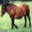} & \includegraphics[width=.14\linewidth]{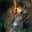} & \includegraphics[width=.14\linewidth]{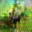} & \includegraphics[width=.14\linewidth]{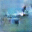} & \includegraphics[width=.14\linewidth]{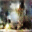} &
  \includegraphics[width=.14\linewidth]{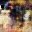} \\
  
    \includegraphics[width=.14\linewidth]{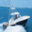} & \includegraphics[width=.14\linewidth]{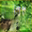} & \includegraphics[width=.14\linewidth]{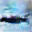} & \includegraphics[width=.14\linewidth]{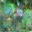} & \includegraphics[width=.14\linewidth]{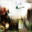} &
  \includegraphics[width=.14\linewidth]{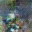} \\
  
    \includegraphics[width=.14\linewidth]{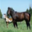} & \includegraphics[width=.14\linewidth]{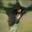} & \includegraphics[width=.14\linewidth]{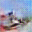} & \includegraphics[width=.14\linewidth]{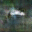} & \includegraphics[width=.14\linewidth]{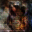} &
  \includegraphics[width=.14\linewidth]{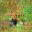} \\
  
    \includegraphics[width=.14\linewidth]{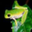} & \includegraphics[width=.14\linewidth]{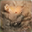} & \includegraphics[width=.14\linewidth]{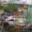} & \includegraphics[width=.14\linewidth]{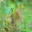} & \includegraphics[width=.14\linewidth]{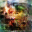} &
  \includegraphics[width=.14\linewidth]{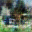}
  
\end{tabular}
    \caption{Column 1: samples from real images of CIFAR-10; column 2-6: samples from synthetic images of five GANs: DCGAN, WGAN-GP, SNGAN, LSGAN, and SAGAN trained by the original 2,000-image subset.}
    \label{fig:6}
\end{figure}

\subsection{Virtual GANs on MNIST}
To emphasize the measurements of creativity, diversity, and inheritance, in the fourth experiment, we created five artificial image sets to \ul{simulate} the optimal generated images and generated images that lack creativity, lack diversity, lack both creativity and diversity, and lack inheritance. Images are taken or modified from the MNIST database~\cite{LeCun2010MNIST}, which contains $28\times28$-pixel handwritten-digit images with labels $\{0,\ 1,\ 2,\ \cdots,\ 9\}$. Fig.~\ref{fig:7} describes how the five artificial sets were built.

\begin{figure}[t]
    \includegraphics[width=.49\textwidth]{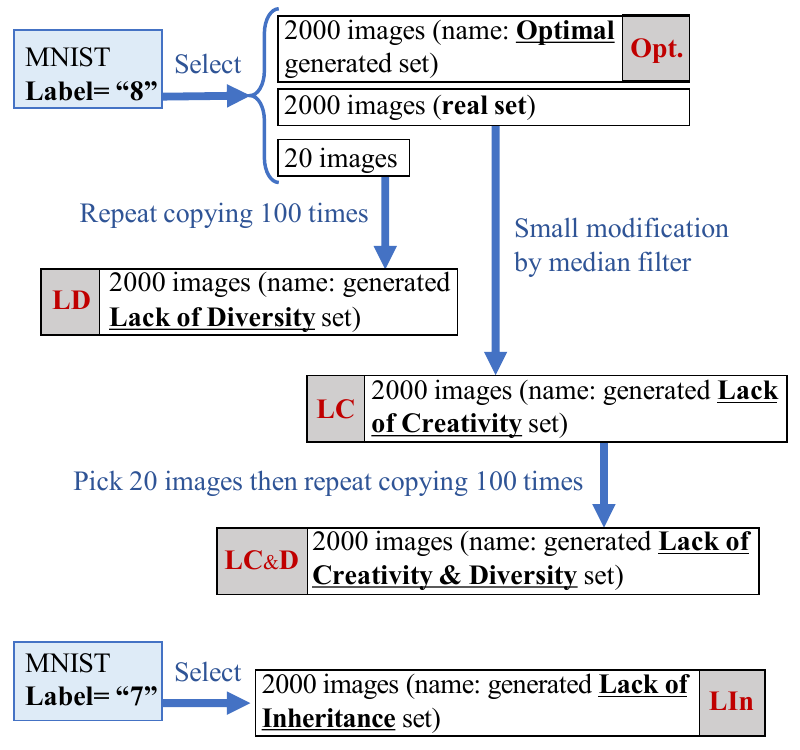}
    \caption{Processes to build real set and generated sets including optimal generated images and generated images lack creativity, lack diversity, lack creativity \& diversity, and lack inheritance.}
    \label{fig:7}
\end{figure}

Three subsets containing 2,000, 2,000, and 20 images were randomly selected from handwritten digit ``8'' images in the MNIST database. There is no common image in the three sets. One set having 2,000 images was considered as the \textit{optimal generated set} (Opt.) because these images come from the same source of real data. The \textit{lack-of-diversity set} (LD) was generated by repeatedly copying the 20 images 100 times. 
Another 2,000-image set was considered as the \textit{real set} and used to generate the \textit{lack-of-creativity set} (LC) by the small modification of all images with the median filter. Since filtering could slightly change images and keep their main information, each image after filtering is similar to its original version \textit{i.e.}, the modified images lack creativity. Choosing 20 images from the lack-of-creativity set and repeatedly copying them 100 times generates the \textit{lack-of-creativity \& diversity set} (LC{\small \&}D). The \textit{lack-of-inheritance set} (LIn) contains 2,000 images selected randomly from handwritten digit ``7'' images in MNIST because the handwritten digit ``7'' is greatly different from digit ``8''.

\begin{figure}[h]
\begin{tabular}{c@{\hspace{0.5em}}c@{\hspace{0.5em}}c@{\hspace{0.5em}}c@{\hspace{0.5em}}c@{\hspace{0.5em}}c}
\smallbf{Real} & \smallbf{Opt.} & \smallbf{LC} & \smallbf{LD} & \smallbf{LC\&D} & \smallbf{LIn} \\
  \includegraphics[width=.14\linewidth]{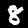} & \includegraphics[width=.14\linewidth]{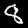} & \includegraphics[width=.14\linewidth]{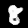} & \includegraphics[width=.14\linewidth]{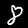} & \includegraphics[width=.14\linewidth]{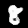} &
  \includegraphics[width=.14\linewidth]{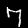} \\
  
  \includegraphics[width=.14\linewidth]{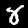} & \includegraphics[width=.14\linewidth]{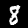} & \includegraphics[width=.14\linewidth]{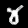} & \includegraphics[width=.14\linewidth]{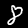} & \includegraphics[width=.14\linewidth]{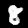} &
  \includegraphics[width=.14\linewidth]{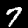} \\
  
  \includegraphics[width=.14\linewidth]{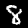} & \includegraphics[width=.14\linewidth]{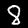} & \includegraphics[width=.14\linewidth]{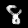} & \includegraphics[width=.14\linewidth]{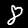} & \includegraphics[width=.14\linewidth]{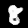} &
  \includegraphics[width=.14\linewidth]{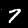} \\
  
  \includegraphics[width=.14\linewidth]{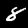} & \includegraphics[width=.14\linewidth]{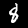} & \includegraphics[width=.14\linewidth]{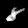} & \includegraphics[width=.14\linewidth]{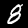} & \includegraphics[width=.14\linewidth]{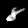} &
  \includegraphics[width=.14\linewidth]{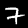} \\
  
  \includegraphics[width=.14\linewidth]{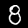} & \includegraphics[width=.14\linewidth]{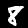} & \includegraphics[width=.14\linewidth]{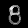} & \includegraphics[width=.14\linewidth]{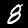} & \includegraphics[width=.14\linewidth]{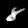} &
  \includegraphics[width=.14\linewidth]{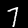} \\
  
  \includegraphics[width=.14\linewidth]{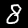} & \includegraphics[width=.14\linewidth]{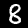} & \includegraphics[width=.14\linewidth]{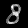} & \includegraphics[width=.14\linewidth]{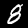} & \includegraphics[width=.14\linewidth]{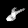} &
  \includegraphics[width=.14\linewidth]{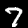}
  
\end{tabular}
    \caption{Column 1: samples from the real set; column 2-6: sample images from the five virtual GAN models: Opt., LC, LD, LC{\small \&}D, and LIn trained by the real set.}
    \label{fig:8}
\end{figure}

The five datasets: Opt., LC, LD, LC{\small \&}D, and LIn \ul{mimic} the datasets that are generated from five virtual GAN models trained on the 2,000-image \textit{real set}. The optimal generated set (Opt.) as if it was generated from an optimal GAN and the other four sets as if they were generated from four different GANs having respective drawbacks. Fig.~\ref{fig:8} shows samples from these datasets.
Then, we applied the LS, and other six measures to the five ``generated'' image sets and the 2,000-image real set. Results are shown in Table~\ref{tab:5}.

\begin{table}[h]
\caption{Measure results from virtual GAN models}
\label{tab:5}
\resizebox{.48\textwidth}{!}{
\begin{threeparttable}
\begin{tabular}{lc:cccccc}
    \hline\noalign{\smallskip}
    \textbf{Model} & \textbf{LS} & \textbf{IS} & \textbf{FID$^\downarrow$} & \textbf{r1NNC} & \textbf{MS} & \textbf{AM$^\downarrow$} & \textbf{SWD$^\downarrow$} \\
    \noalign{\smallskip}\hline\noalign{\smallskip}
    \textbf{Opt.}& \textbf{0.994} & 1.591 & \textbf{4.006} & \textbf{0.978} & \textbf{1.968} & 343.842 & \textbf{23.427} \\
    LC & 0.820 & \textbf{2.112} & 67.310 & 0.039 & 1.007 & 371.322 & 657.527 \\
    LD & 0.892 & \underline{1.299} & 59.112 & \underline{0.002} & 1.597 & \textbf{337.553} & 211.140 \\
    LC{\small \&}D & 0.775 & 1.418 & 116.656 & 0.775 & 0.789 & 389.437 & 740.512 \\
    LIn & \underline{0.526} & 1.941 & \underline{130.827} & 0.462 & \underline{0.605} & \underline{441.292} & \underline{1166.082} \\
    \noalign{\smallskip}\hline
\end{tabular}
    \begin{tablenotes}
      \item \textbf{Bold value}: the best model by the measure of this column.
      \item \underline{Underline}: the worst model by the measure of this column.
      \item Dashed line: to the left are our proposed measures; to the right are the compared measures.
      \item $^\downarrow$ Measures with this symbol mean smaller score is better; otherwise, larger score is better.
    \end{tablenotes}
\end{threeparttable}}
\end{table}

In this experiment, we know the Opt. GAN is the best one. Hence, we could state the concrete conclusion that LS, FID, 1NNC, MS, and SWD successfully discover the best GAN model. As we discussed in Section~\ref{related}, results of IS confirm that it is not good at evaluating the creativity and inheritance of GANs because it gives them higher scores (2.112 and 1.941) than the best case (1.591) and the IS emphasizes the diversity. Other measures also show their characteristics and preferences: LS agrees with FID, MS, MA, and SWD that the worst model is lack of inheritance; IS and 1NNC indicate that the model lacking diversity is the worst. By contrast, AM does not care about the diversity very much because its scores of the best model and the model lacking diversity are similar; and LS, FID, MS, AM, and SWD value creativity more among diversity and creativity.

\section{Discussion}

\begin{figure*}[htbp]
    \centering
    \includegraphics[width=.95\textwidth]{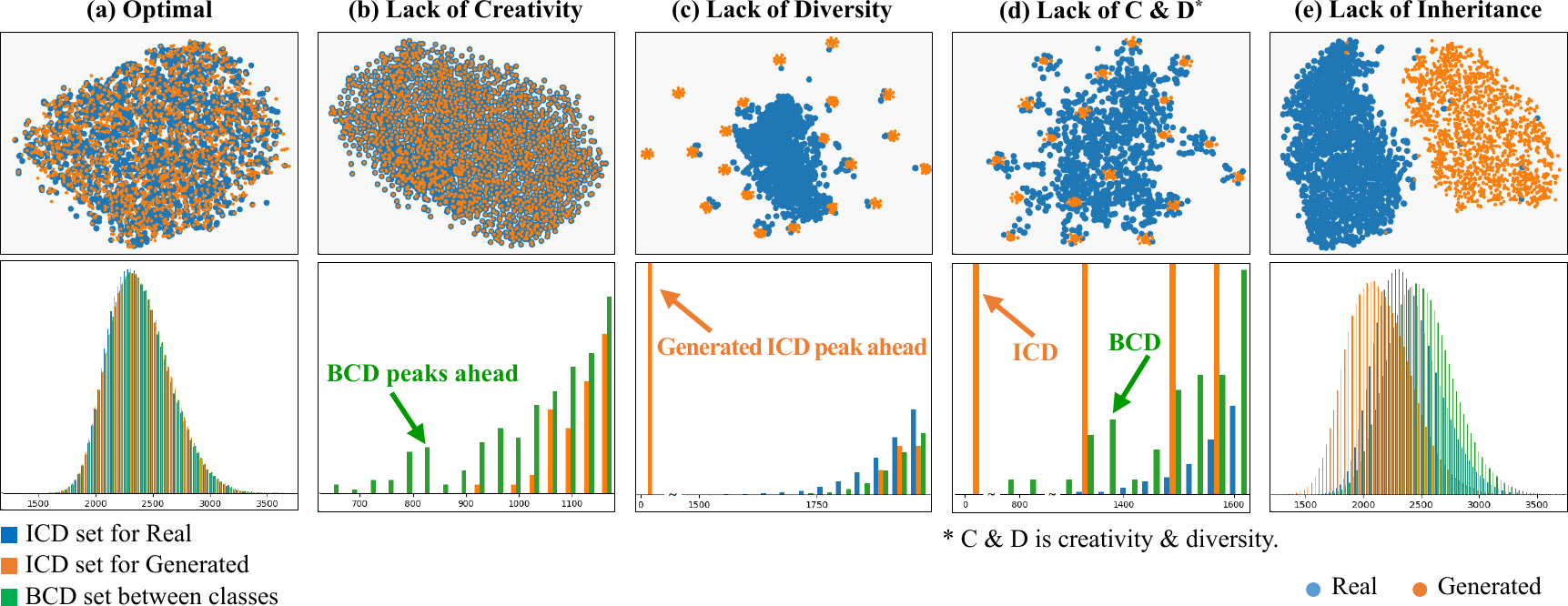}
    \caption{Real and generated datasets from virtual GANs on MNIST. First row: the 2D tSNE plots of real (blue) and generated (orange) data points from each virtual GAN. Second row: histograms of ICDs (blue for real data; orange for generated data) and BCD for real and generated datasets. The histograms in (b)-(d) are zoomed to the beginning of plots; (a) and (e) have the entire histograms.}
    \label{fig:9}
\end{figure*}

Since \citet{Geirhos2019ImageNet-trained} recently reported that CNNs trained by ImageNet have a strong bias to recognize textures rather than shapes, we chose texture images to train GANs. From results in Table~\ref{tab:3}, the proposed LS agrees with IS, 1NNC, and SWD that the WGAN-DP performs the best and SNGAN performs the worst on selected texture images. As shown in Table~\ref{tab:4}, LS makes the same evaluation on CIFAR-10 dataset. As shown in Fig.~\ref{fig:4}, SNGAN and WGAN-GP generate synthetic images that look different from real samples but SNGAN tends to generate many very similar images (its diversity is low). Hence, all measures rate SNGAN as performing worst on texture datasets. Results on CIFAR-10 dataset (Table~\ref{tab:4}) show a similar conclusion.

\subsection{Evaluation of GAN Measures}
Our results indicate that LS is a promising measure for GAN. Without a gold standard, however, it is difficult to compare GAN evaluation methods and to state which method is better when they performed similarly. 
To show measures' characteristics/preferences and evaluate them in terms of the three respects of an ideal GAN, we artificially created five datasets (Fig.~\ref{fig:7}) as if they were generated from five virtual GANs trained on MNIST. In this controlled circumstance, the LS, FID, 1NNC, MS, and SWD discerned the best GAN model (Table~\ref{tab:5}). In addition, by analyzing the distributions of ICD and BCD sets, LS could provide evidences for the lack of creativity, diversity, and inheritance to \ul{explain} its results. 
As with Fig.~\ref{fig:2}, we plot data and histograms of their ICD and BCD sets in Fig.~\ref{fig:9} to show their relationships with the LS. Each image in MNIST has $28\times 28$ pixels so that these data are in a 784-dimensional space. To visually represent the data in two dimensions, we applied the \textit{t-distributed Stochastic Neighbor Embedding} (tSNE)~\cite{Maaten2008Visualizing} method. In contrast, the ICD and BCD sets were computed in the 784-dimensional space directly, without using any dimensionality reduction or embedding methods.

As shown in Fig.~\ref{fig:9}, the ICD and BCD sets for computing the LS offer an interpretation of how LS works and verify that LS is able to detect the lack of creativity, diversity, and inheritance for GAN generated data, as we discussed in Section~\ref{ls4gan}. Fig.~\ref{fig:9}(a) shows the real (training) data and data generated by the ideal GAN. Since distributions of the three sets are nearly the same, LS gets the highest score (close to 1, in Table~\ref{tab:5}). Fig.~\ref{fig:9}(b) shows the GAN lacks creativity. Almost every generated data point is overlapped with (or very close to) a real data point. Hence, the BCD set has some peaks at the beginning of plot. Lack of diversity is shown by Fig.~\ref{fig:9}(c). Most generated data points are not close to real data points, but some points are very close to each other. That results in a peak at the beginning of generated ICD plot. \ul{Any differences of the histograms of ICD and BCD sets will decrease the LS}. Therefore, LS is affected by the isolated peaks of one distance set. Fig.~\ref{fig:9}(d) shows the combined effect. Generated data points are close to real data points and cluster in a few places. Both BCD and generated ICD peaks can be found at the beginning of plot. For the last Fig.~\ref{fig:9}(e), lack of inheritance means generated data are dissimilar from real data. The two kinds of data are distributed separately so that distributions of the three sets are all different, contrary to Fig.~\ref{fig:9}(a); that leads to the lowest LS. 

\subsection{Time Complexity}

\begin{figure}[t]
    \includegraphics[width=.48\textwidth]{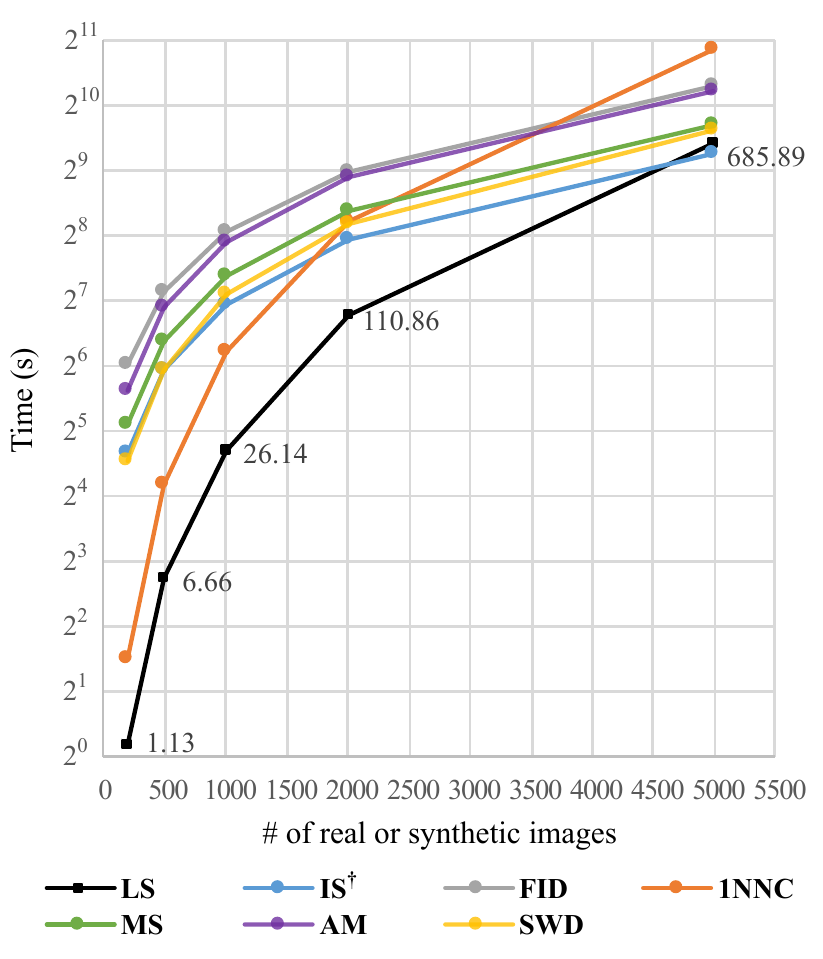}
    \caption{Time cost of measures running on a single core of CPU (i7-6900K). To test time costs, we used same amount of real and generated images (200, 500, 1000, 2000, and 5000) from CIFAR-10 dataset and DCGAN trained on CIFAR-10. $^\dagger$IS only used the generated images.}
    \label{fig:10}
\end{figure}

Both LS and 1NNC use the direct image comparison which is the Euclidean ($l^2$-norm) distance between two images. The main time cost of LS is to calculate ICD and BCD sets. LS’s time complexity for N (Class 1) and M (Class 2) data is about $\mathcal{O}\left(N^2/2+M^2/2+MN\right)$ (two ICD sets and one BCD set). Although 1NNC also uses Euclidean distance between two images, its time complexity is about $\mathcal{O}\left((M+N)^2\right)$, which is double the cost of LS, because it uses the \textit{Leave-One-Out Cross-Validation} for \textit{1-Nearest Neighbor} classifier. For each sample from the $(M+N)$ images, $(M+N-1)$ distances should be calculated to find its nearest neighbor.

The IS, FID, MS, and AM use the \textit{Inception} neural network to process images so that their time costs are greater than LS if running on CPU (i7-6900K). Although running on GPU could accelerate the processing of neural networks, for fair comparisons of time costs, all measures were run on a single core of CPU because 1NNC and LS do not run on GPU currently; but in the future, they also could be accelerated by moving to run on GPU. 

The Fig.~\ref{fig:10} shows for as many as 5,000 samples, LS has uniformly superior performance in terms of time complexity. Although the growth trend shows other measures (except 1NNC) will be running faster than LS at some larger number of samples, we do not need such a large data set to evaluate GANs. Since GAN measures are stable to the growth of amount of data (as shown by Fig.~\ref{fig:3}), our experiments demonstrate that 2,000 samples are adequate for GAN measures.

\subsection{Comparison Summary}
The compared measures have various drawbacks.
The IS, FID, MS, and AM depends on the Inception network pre-trained by ImageNet. In addition, IS lacks the ability to detect overfitting (creativity) and inheritance and FID depends on the Gaussian distribution assumption of feature vectors from the network. The SWD and 1NNC require that the amount of real data be equal to the amount of generated data. The local conditions of distributions will greatly influence results of 1NNC (\textit{e.g.}, it obtains extreme values like 0 or 1 in Table~\ref{tab:2}) because it only considers the 1-nearest neighbor. That there are several required parameters\footnote{More details are in its source codes: \url{https://github.com/koshian2/swd-pytorch}.} such as \texttt{slice\_size} and \texttt{n\_descriptors} is another disadvantage of SWD; both changes of parameters and the randomness of radial projections will influence its results. 

The proposed LS is designed to avoid those disadvantages. We have created three criteria (creativity, diversity, and inheritance) to describe ideal GANs. And we have shown that LS evaluates a GAN by examining the three aspects in a uniform framework. In addition, LS does not need a pre-trained classifier, image analysis methods, nor \textit{a priori} knowledge of distributions. Ranging between 0 and 1 is another merit of LS because we could know how close the performance of a GAN model is to the ideal situation.

We found that the idea of GS~\cite{khrulkov2018geometry} has some similar points to our LS. The GS compares the complexities of the manifold structures, which are built by pairwise distances of samples, between real and generated data. And we think the complexity of data manifold may have some connections to data separability. However, we found the results of GS is too unstable to use. For example, we have computed GS measure twice on 2,000 generated and 2,000 real images from DCGAN and CIFAR-10 (the same test in Section~\ref{cifar-10}); one result is 0.0078 and another is 0.0142 -- it is almost doubled. As Fig.~\ref{fig:3} shown, GS results not only differ on each computation time but also on the amount of samples.

\subsection{Contributions and Future Works}
LS uses a very simple process -- it calculates only Euclidean distances of data and the KS distances between distributions of data distances; those methods are independent of image types, amounts, and sizes. LS offers a distinctly new way to measure the separability of real and generated data. By experiments, it has been verified to be an effective GAN evaluation method by examining the three aspects (creativity, diversity, and inheritance) of ideal GANs. In particular, \ul{LS can provide evidences of the three aspects in the histograms of ICD and BCD sets to explain its results} (\textit{e.g.}, Fig.~\ref{fig:9}). In the future, individual measures (scores) for each aspect could be developed by further analysis of the ICD and BCD sets.

Besides evaluation of GANs, LS could measure data complexity/separability as well. According to Theorem \ref{thm:1}, the LS provides \ul{an effective way to verify whether the distributions of two sample sets are identical for any dimensionality}. Thus, our proposed novel model-independent measure for GAN evaluation has clear advantages in theory and has been demonstrated to be worthwhile for future GAN studies.

Results also show that a GAN that performs well with one type of image may not do so with other types. For example, in Table~\ref{tab:2} and Fig.~\ref{fig:5}, we see that the SNGAN performs much better on \texttt{Hole} images than on other image types. Hence, in future work, we will examine the proposed measure on more types of images and GAN models.

\section{Conclusion}
The novel GAN measure -- LS -- we propose here can directly analyze the generated images without using a pre-trained classifier and it is stable with respect to the amount of images. The strength of LS is that it avoids the disadvantages of compared methods, such as IS and FID, and has fewer constraints and wider applications. Furthermore, LS could evaluate the performance of GANs well, and particularly, provides explanation of results in the three main respects of optimal GANs according to our expectations of ideal generated images. Such explanations help us to deepen our understanding of GANs and of other GAN measures that will help to improve GAN performance.

\appendix
\section{Proof of Theorem \ref{thm:1}}
\label{sec:proof}

Consider two classes $X$ and $Y$ that have the same distribution (distributions have the same shape, position, and support, \textit{i.e.}, the same probability density function) and have sufficient data points to fill their support domains. Suppose $X$ and $Y$ have $N_x$ and $N_y$ data points, and assume the sampling density ratio is $\frac{N_y}{N_x} =\alpha$. Before providing the proof of Theorem \ref{thm:1}, we firstly prove Lemma \ref{lma:1}, which will be used later.

\begin{lemma} \label{lma:1}
If and only if two classes $X$ and $Y$ have the same distribution covering region $\Omega$ and $\frac{N_y}{N_x} =\alpha$, for any sub-region $\Delta \subseteq \Omega$, with $X$ and $Y$ having $n_{xi},n_{yi}$ points, $\frac{n_{yi}}{n_{xi}} =\alpha$ holds.
\end{lemma}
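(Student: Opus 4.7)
The plan is to prove the two directions of the biconditional separately, leaning on the premise stated at the top of the appendix that $X$ and $Y$ ``have sufficient data points to fill their support domains.'' I will treat this as a large-sample regime in which empirical frequencies coincide with the underlying probability measure on any sub-region $\Delta \subseteq \Omega$, so that $n_{xi}/N_x \to P_X(\Delta)$ and $n_{yi}/N_y \to P_Y(\Delta)$.

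For the forward direction, I would assume $X$ and $Y$ share a common probability density $p(\cdot)$ on $\Omega$ and that $N_y/N_x = \alpha$. For any sub-region $\Delta$, both expected fractions equal $P(\Delta)=\int_{\Delta} p(x)\,dx$, so in the dense-sampling regime
\[
n_{xi} \approx N_x \cdot P(\Delta), \qquad n_{yi} \approx N_y \cdot P(\Delta) = \alpha N_x \cdot P(\Delta),
\]
which gives $n_{yi}/n_{xi} = \alpha$ directly. This step is essentially bookkeeping; the only subtlety is handling sub-regions with $P(\Delta)=0$, which I would simply exclude from the statement (or note that $n_{xi}=n_{yi}=0$ there and the ratio is vacuous).

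For the reverse direction, I would assume $n_{yi}/n_{xi}=\alpha$ holds for every sub-region $\Delta \subseteq \Omega$, and deduce that the distributions $P_X$ and $P_Y$ must coincide. Using the dense-sampling identification $P_X(\Delta)=n_{xi}/N_x$ and $P_Y(\Delta)=n_{yi}/N_y$,
\[
\frac{P_Y(\Delta)}{P_X(\Delta)} \;=\; \frac{n_{yi}/N_y}{n_{xi}/N_x} \;=\; \frac{N_x}{N_y}\cdot \frac{n_{yi}}{n_{xi}} \;=\; \frac{1}{\alpha}\cdot \alpha \;=\; 1,
\]
so $P_X(\Delta)=P_Y(\Delta)$ for every $\Delta$. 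By the standard fact that two probability measures on $\Omega$ agreeing on a sufficiently rich collection of sub-regions (for example, all measurable sets, or a generating $\pi$-system such as boxes) must be equal, the distributions of $X$ and $Y$ coincide. To close out the requirement $N_y/N_x=\alpha$, I would take $\Delta=\Omega$, which yields $n_{yi}=N_y$ and $n_{xi}=N_x$, so the ratio is exactly $\alpha$.

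The main obstacle I expect is purely technical: making precise the phrase ``sufficient data points to fill their support domains,'' since the identities $n_{xi}/N_x = P_X(\Delta)$ hold only in a limiting or almost-sure sense. I would handle this by stating upfront that the equalities $n_{yi}/n_{xi}=\alpha$ are to be read in the limit $N_x,N_y\to\infty$ with $N_y/N_x=\alpha$ fixed, consistent with the hypothesis ``$|\{d_x\}|,|\{d_y\}|\to\infty$'' appearing in Theorem~\ref{thm:1}. Everything else reduces to the tautology that two measures are equal iff they assign identical probabilities to every sub-region, which is exactly the content translated into point-count ratios.
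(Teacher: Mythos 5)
Your proposal is correct and follows essentially the same route as the paper: both identify the count of points falling in a sub-region with $N$ times its probability mass, so that $n_{yi}/n_{xi} = \alpha \cdot P_Y(\Delta)/P_X(\Delta)$, which equals $\alpha$ for all $\Delta$ exactly when the two distributions coincide. The paper phrases this with densities on infinitesimal cells ($n_{xi}=\Delta_i f(x_i)N_x$, $n_{yi}=\Delta_i g(x_i)N_y$) and a single chain of equivalences rather than splitting the biconditional, while you are somewhat more careful about the limiting interpretation and the zero-mass edge case, but the underlying argument is identical.
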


\begin{proof}
Assume the distributions of $X$ and $Y$ are $f(x)$ and $g(y)$. In the union region of $X$ and $Y$, arbitrarily take one tiny cell (region) $\Delta_i$ with $n_{xi}=\Delta_if(x_i)N_x,\  n_{yi}=\Delta_ig(y_j)N_y;\ x_i=y_j$. Then,
\[
\frac{n_{yi}}{n_{xi}}=\frac{\Delta_ig(x_i)N_y}{\Delta_if(x_i)N_x}=\alpha \frac{g(x_i)}{f(x_i)}
\]
Therefore:
\[
\alpha \frac{g(x_i)}{f(x_i)} =\alpha \Leftrightarrow \frac{g(x_i)}{f(x_i)}=1 \Leftrightarrow \forall x_i:g(x_i)=f(x_i)
\]
\qed
\end{proof}
\textbf{Sufficient condition of Theorem \ref{thm:1}.} \textit{When $|\{d_x\}|,|\{d_y\}|\to \infty$, if the two classes $X$ and $Y$ have the same distribution, the distributions of the ICD and BCD sets are identical.}

\begin{figure}[h]
    \includegraphics[width=.48\textwidth]{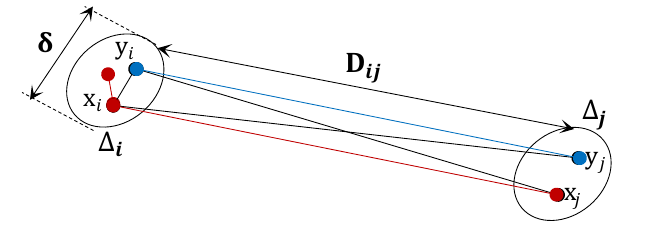}
    \caption{Two non-overlapping small cells}
    \label{fig:a1}
\end{figure}

\begin{proof}
Within the area, select two tiny non-overlapping cells (regions) $\Delta_i$  and $\Delta_j$ (Fig.~\ref{fig:a1}). Since $X$ and $Y$ have the same distribution but in general different densities, the number of points in the two cells $n_{xi},n_{yi};\ n_{xj},n_{yj}$ fulfills:
\[
\frac{n_{yi}}{n_{xi}} =\frac{n_{yj}}{n_{xj}} =\alpha
\]
The scale of cells is $\delta$, the ICDs and BCDs of $X$ and $Y$ data points in cell $\Delta_i$ are approximately $\delta$ because the cell is sufficiently small. By the Definition~\ref{def:1}~and~\ref{def:2}:
\[
d_{x_i}\approx d_{x_i,y_i}\approx \delta;\quad x_i,y_i\in \Delta_i
\]
Similarly, the ICDs and BCDs of $X$ and $Y$ data points between cells $\Delta_i$  and $\Delta_j$ are approximately the distance between the two cells $D_{ij}$:
\[
d_{x_{ij}}\approx d_{x_i,y_j}\approx d_{y_i,x_j}\approx D_{ij};\; x_i,y_i\in \Delta_i;\, x_j,y_j\in \Delta_j
\]
First, divide the whole distribution region into many non-overlapping cells. Arbitrarily select two cells $\Delta_i$  and $\Delta_j$ to examine the ICD set for $X$ and the BCD set for $X$ and $Y$. By Corollaries \ref{cor:1} and \ref{cor:2}: \\
\\
\romannum{1} ) The ICD set for $X$ has two distances: $\delta$ and $D_{ij}$, and their numbers are:
\[
d_{x_i}\approx \delta;\; x_i\in \Delta_i:\; |\{d_{x_i}\}|=\frac{1}{2}n_{xi}(n_{xi}-1)
\]
\[
d_{x_{ij}}\approx D_{ij};\; x_i\in \Delta_i;x_j\in \Delta_j:\; |\{d_{x_{ij}}\}|=n_{xi}n_{xj}
\]
\romannum{2} ) The BCD set for $X$ and $Y$ also has two distances: $\delta$ and $D_{ij}$, and their numbers are:
\[
d_{x_i,y_i}\approx \delta;\; x_i,y_i\in \Delta_i:\; |\{d_{x_i ,y_i}\}|=n_{xi} n_{yi}
\]
\[
d_{x_i,y_j}\approx d_{y_i,x_j}\approx D_{ij};\; x_i,y_i\in \Delta_i;x_j,y_j\in \Delta_j:
\]
\[
|\{d_{x_i,y_j}\}|=n_{xi} n_{yj};\; |\{d_{y_i,x_j}\}|=n_{yi} n_{xj}
\]
Therefore, the proportions of the number of distances with a value of $D_{ij}$ in the ICD and BCD sets are:\\
\\
For ICDs: 
\[
\frac{|\{d_{x_{ij}} \}|}{|\{d_x \}|} =\frac{2n_{xi} n_{xj}}{N_x (N_x-1)}
\]
For BCDs, considering the density ratio:
\[
\frac{|\{d_{x_i,y_j} \}|+|\{d_{y_i,x_j }\}|}{|\{d_{x,y} \}|} =\frac{\alpha n_{xi} n_{xj}+\alpha n_{xi} n_{xj}}{\alpha N_x^2 }=\frac{2n_{xi} n_{xj}}{N_x^2}
\]
The ratio of proportions of the number of distances with a value of $D_{ij}$ in the two sets is:
\[
\frac{N_x (N_x-1)}{N_x^2}=1-\frac{1}{N_x} \to 1 \; \; (N_x\to \infty)
\]
This means that the number of proportions of the number of distances with a value of $D_{ij}$ in the two sets is equal. We then examine the proportions of the number of distances with a value of $\delta$ in the ICD and BCD sets. \\
\\
For ICDs:
\begin{multline*}
\sum_{i} \frac{|\{d_{x_i}\}|}{|\{d_x\}|} = \frac{\sum_{i} [n_{xi} (n_{xi}-1)]}{N_x (N_x-1)} \\ = \frac{\sum_{i} (n_{xi}^2-n_{xi} )}{N_x^2-N_x} = \frac{\sum_{i} (n_xi^2 ) -N_x}{N_x^2-N_x}
\end{multline*}
For BCDs, considering the density ratio: 
\[
\sum_{i} \frac{|\{d_{x_i,y_i } \}|}{|\{d_{x,y})\}|} = \frac{\sum_{i} (n_{xi}^2 )}{N_x^2}
\]
The ratio of proportions of the number of distances with a value of $\delta$ in the two sets is:
\begin{multline*}
\frac{\sum_{i} (n_{xi}^2 ) }{N_x^2 }\cdot \frac{N_x^2-N_x}{\sum_{i} (n_{xi}^2 ) -N_x } \\
=\sum_{i} \left(\frac{n_{xi}^2}{N_x^2} \right) \cdot \frac{1-\frac{1}{N_x}}{\sum_{i} \left(\frac{n_{xi}^2}{N_x^2} \right) -\frac{1}{N_x}}\to 1 \; \; (N_x\to \infty)
\end{multline*}
This means that the number of proportions of the number of distances with a value of $\delta$ in the two sets is equal. \\
\\
In summary, the fact that the proportion of any distance value ($\delta$ or $D_{ij}$) in the ICD set for $X$ and in the BCD set for $X$ and $Y$ is equal indicates that the distributions of the ICD and BCD sets are identical, and a corresponding proof applies to the ICD set for $Y$.
\qed
\end{proof}
\textbf{Necessary condition of Theorem \ref{thm:1}.} \textit{When $|\{d_x\}|,|\{d_y\}|\to \infty$, if the distributions of the ICD and BCD sets are identical, then the two classes $X$ and $Y$ must have the same distribution.}
\begin{remark}
We prove its \textbf{contrapositive}: if $X$ and $Y$ do not have the same distribution, the distributions of the ICD and BCD sets are not identical. We then apply proof by \textbf{contradiction}: suppose that $X$ and $Y$ do not have the same distribution, but the distributions of the ICD and BCD sets are identical.
\end{remark}

\begin{proof}
Suppose classes $X$ and $Y$ have the data points $N_x, N_y$, which $\frac{N_y}{N_x} =\alpha $. Divide their distribution area into many non-overlapping tiny cells (regions). In the $i$-th cell $\Delta_i$, since distributions of $X$ and $Y$ are different, according to Lemma \ref{lma:1}, the number of points in the cell $n_{xi},n_{yi}$ fulfills:
\[
\frac{n_{yi}}{n_{xi}} = \alpha _i; \; \; \exists \alpha _i \neq \alpha
\]
The scale of cells is $\delta$ and the ICDs and BCDs of the $X$ and $Y$ points in cell $\Delta_i$ are approximately $\delta$ because the cell is sufficiently small.
\[
d_{x_i}\approx d_{y_i}\approx d_{x_i,y_i}\approx \delta; \; \; x_i,y_i\in \Delta_i
\]
In the $i$-th cell $\Delta_i$:\\
\\
\romannum{1}) The ICD of $X$ is $\delta$, with a proportion of:
\begin{multline} \label{eq:1}
    \sum_{i} \frac{|\{d_{x_i}\}|}{|\{d_x\}|} = \frac{\sum_{i} [n_{xi} (n_{xi}-1)]}{N_x (N_x-1)} \\
    =\frac {\sum_{i} (n_{xi}^2-n_{xi} )}{N_x^2-N_x}=\frac{\sum_{i} (n_{xi}^2 ) -N_x}{N_x^2-N_x}
\end{multline}
\romannum{2}) The ICD of $Y$ is $\delta$, with a proportion of:
\begin{multline} \label{eq:2}
    \sum_{i} \frac{|\{d_{y_i}\}|}{|\{d_y\}|} = \frac{\sum_{i} [n_{yi} (n_{yi}-1)]}{N_y (N_y-1)}=\frac {\sum_{i} (n_{yi}^2-n_{yi} )}{N_y^2-N_y}\\
    =\frac{\sum_{i} (n_{yi}^2 ) -N_y}{N_y^2-N_y}\Bigg\rvert_{\substack{N_y=\alpha N_x \\ n_{yi} = \alpha _i n_{xi}}} = \frac{\sum_{i} (\alpha _i^2 n_{xi}^2 ) -\alpha N_x}{\alpha^2 N_x^2-\alpha N_x}
\end{multline}
\romannum{3}) The BCD of $X$ and $Y$ is $\delta$, with a proportion of:
\begin{multline} \label{eq:3}
    \sum_{i} \frac{|\{d_{x_i,y_i} \}|}{|\{d_{x,y} \}|}=\frac {\sum_{i} (n_{xi} n_{yi} ) }{N_x N_y}=\frac {\sum_{i} (\alpha _i n_{xi}^2 ) }{\alpha N_x^2}
\end{multline}
For the distributions of the two sets to be identical, the ratio of proportions of the number of distances with a value of $\delta$ in the two sets must be 1, that is $\frac{(\ref{eq:3})}{(\ref{eq:1})}=\frac{(\ref{eq:3})}{(\ref{eq:2})}=1$. Therefore:

\begin{multline} \label{eq:4}
    \frac{(\ref{eq:3})}{(\ref{eq:1})}= \frac {\sum_{i} (\alpha _i n_{xi}^2 ) }{\alpha N_x^2} \cdot \frac{N_x^2-N_x}{\sum_{i} (n_{xi}^2 )-N_x}\\
    = \frac{1}{\alpha N_x^2}\sum_{i} (\alpha _i n_{xi}^2 )\cdot \frac {1-\frac {1}{N_x} }{\frac {1}{N_x^2} \sum_{i}(n_{xi}^2 ) -\frac {1}{N_x}}\Bigg\rvert_{N_x\to \infty} \\
    =\frac {1}{\alpha}\cdot \frac{\sum_{i} (\alpha_i n_{xi}^2 ) }{\sum_{i}(n_{xi}^2 )}=1
\end{multline}
Similarly,
\begin{multline} \label{eq:5}
    \frac{(\ref{eq:3})}{(\ref{eq:2})}= \frac {\sum_{i} (\alpha _i n_{xi}^2 ) }{\alpha N_x^2} \cdot \frac{\alpha^2 N_x^2-\alpha N_x}{\sum_{i} (\alpha _i^2 n_{xi}^2 ) -\alpha N_x}\\
    = \frac{\sum_{i} (\alpha _i n_{xi}^2 )}{N_x^2}\cdot \frac {\alpha-\frac {1}{N_x} }{\frac {1}{N_x^2} \sum_{i} (\alpha _i^2 n_{xi}^2 ) -\frac {\alpha}{N_x}}\Bigg\rvert_{N_x\to \infty} \\
    =\alpha \cdot \frac{\sum_{i} (\alpha_i n_{xi}^2 ) }{\sum_{i} (\alpha _i^2 n_{xi}^2 )}=1
\end{multline}
To eliminate the $\sum_{i} (\alpha _i n_{xi}^2 )$ by considering the Eq.~\ref{eq:4}~and~\ref{eq:5}, we have:
\[
\sum_{i}(n_{xi}^2 )=\frac{\sum_{i} (\alpha _i^2 n_{xi}^2 )}{\alpha^2}
\]
Let $\rho_i=\left(\frac{\alpha_i}{\alpha}\right)^2$, then,
\[
\sum_{i}(n_{xi}^2 )=\sum_{i} (\rho_i n_{xi}^2 )
\]
Since $n_{xi}$ could be any value, to hold the equation requires $\rho_i=1$. Hence:
\[
\forall \rho_i=\left(\frac{\alpha_i}{\alpha}\right)^2=1 \Rightarrow\forall \alpha_i=\alpha
\]
This contradicts $\exists \alpha_i\neq \alpha$. Therefore, the contrapositive proposition has been proved. 
\qed
\end{proof}

\section{DSI for Multi-class Dataset}
\label{sec:multiclass}
In general, for a $n$-class dataset, the process to obtain its DSI is:

\begin{enumerate}
    \item Compute $n$ ICD sets for each class: $\{d_{C_i}\};\; i=1,2,\cdots,n$.
    
    \item Compute $n$ BCD sets for each class. For the $i$-th class of data $C_i$, the BCD set is the set of distances between any two points in $C_i$ and $\overline{C_i}$ (other classes, not $C_i$): $\{d_{C_i,\overline{C_i}}\}$.
    
    \item Compute the $n$ KS distances between ICD and BCD sets for each class: $s_i=KS(\{d_{C_i }\},\{d_{C_i,\overline{C_i}}\})$.
    
    \item The final DSI is derived from the $n$ KS distances by requirements. \textit{E.g.}, their average: $DSI(\{C_i \})=\frac{\sum s_i}{n}$ or the maximum value: $DSI(\{C_i \})=\max\{s_i\}$.
\end{enumerate}


%
\section*{Compliance with ethical standards}
\textbf{Conflict of interest} The authors declare that they have no conflict of interest.

\bibliographystyle{spbasic}      

\bibliography{refs}   

\begin{thebibliography}{45}
\providecommand{\natexlab}[1]{#1}
\providecommand{\url}[1]{{#1}}
\providecommand{\urlprefix}{URL }
\expandafter\ifx\csname urlstyle\endcsname\relax
  \providecommand{\doi}[1]{DOI~\discretionary{}{}{}#1}\else
  \providecommand{\doi}{DOI~\discretionary{}{}{}\begingroup
  \urlstyle{rm}\Url}\fi
\providecommand{\eprint}[2][]{\url{#2}}

\bibitem[{noa(2021)}]{noauthor_scipystatswasserstein_distance_nodate}
 (2021) scipy.stats.wasserstein\_distance — {SciPy} v1.6.1 {Reference}
  {Guide}.
  \urlprefix\url{https://docs.scipy.org/doc/scipy/reference/generated/scipy.stats.wasserstein_distance.html}

\bibitem[{Arjovsky et~al.(2017)Arjovsky, Chintala, and
  Bottou}]{pmlr-Wassersteingan}
Arjovsky M, Chintala S, Bottou L (2017) {W}asserstein generative adversarial
  networks. In: Precup D, Teh YW (eds) Proceedings of the 34th International
  Conference on Machine Learning, PMLR, International Convention Centre,
  Sydney, Australia, Proceedings of Machine Learning Research, vol~70, pp
  214--223, \urlprefix\url{http://proceedings.mlr.press/v70/arjovsky17a.html}

\bibitem[{Backes et~al.(2012)Backes, Casanova, and Bruno}]{Backes2012Color}
Backes AR, Casanova D, Bruno OM (2012) Color texture analysis based on fractal
  descriptors. Pattern Recognition 45(5):1984--1992,
  \doi{10.1016/j.patcog.2011.11.009}

\bibitem[{Bonneel et~al.(2015)Bonneel, Rabin, Peyré, and
  Pfister}]{bonneel_sliced_2015}
Bonneel N, Rabin J, Peyré G, Pfister H (2015) Sliced and {Radon} {Wasserstein}
  {Barycenters} of {Measures}. Journal of Mathematical Imaging and Vision
  51(1):22--45, \doi{10.1007/s10851-014-0506-3}

\bibitem[{Borji(2019)}]{Borji2019Pros}
Borji A (2019) Pros and cons of gan evaluation measures. Computer Vision and
  Image Understanding 179:41--65, \doi{10.1016/j.cviu.2018.10.009}

\bibitem[{Che et~al.(2016)Che, Li, Jacob, Bengio, and Li}]{che2016mode}
Che T, Li Y, Jacob A, Bengio Y, Li W (2016) Mode {Regularized} {Generative}
  {Adversarial} {Networks}.
  \urlprefix\url{https://openreview.net/forum?id=HJKkY35le}

\bibitem[{Deng et~al.(2009)Deng, Dong, Socher, Li, Li, and
  Fei-Fei}]{Deng2009ImageNet:}
Deng J, Dong W, Socher R, Li LJ, Li K, Fei-Fei L (2009) Imagenet: A large-scale
  hierarchical image database. 2009 IEEE Conference on Computer Vision and
  Pattern Recognition, pp 248--255, \doi{10.1109/CVPR.2009.5206848}, iSSN:
  1063-6919

\bibitem[{Frank J.~Massey(1951)}]{Kolmogorov1951}
Frank J~Massey J (1951) The {Kolmogorov}-{Smirnov} {Test} for {Goodness} of
  {Fit}. Journal of the American Statistical Association 46(253):68--78,
  \doi{10.1080/01621459.1951.10500769}

\bibitem[{Geirhos et~al.(2019)Geirhos, Rubisch, Michaelis, Bethge, Wichmann,
  and Brendel}]{Geirhos2019ImageNet-trained}
Geirhos R, Rubisch P, Michaelis C, Bethge M, Wichmann FA, Brendel W (2019)
  Imagenet-trained cnns are biased towards texture; increasing shape bias
  improves accuracy and robustness. In: 7th International Conference on
  Learning Representations, {ICLR} 2019, New Orleans, LA, USA, May 6-9, 2019,
  OpenReview.net, \urlprefix\url{https://openreview.net/forum?id=Bygh9j09KX}

\bibitem[{Goodfellow et~al.(2014)Goodfellow, Pouget-Abadie, Mirza, Xu,
  Warde-Farley, Ozair, Courville, and Bengio}]{Goodfellow2014Generative}
Goodfellow I, Pouget-Abadie J, Mirza M, Xu B, Warde-Farley D, Ozair S,
  Courville A, Bengio Y (2014) Generative adversarial nets. In: Ghahramani Z,
  Welling M, Cortes C, Lawrence ND, Weinberger KQ (eds) Advances in Neural
  Information Processing Systems 27, Curran Associates, Inc., p 2672–2680,
  \urlprefix\url{http://papers.nips.cc/paper/5423-generative-adversarial-nets.pdf}

\bibitem[{Gretton et~al.(2012)Gretton, Borgwardt, Rasch, Sch{{\"o}}lkopf, and
  Smola}]{JMLR:v13:gretton12a}
Gretton A, Borgwardt KM, Rasch MJ, Sch{{\"o}}lkopf B, Smola A (2012) A kernel
  two-sample test. Journal of Machine Learning Research 13(25):723--773,
  \urlprefix\url{http://jmlr.org/papers/v13/gretton12a.html}

\bibitem[{Gulrajani et~al.(2017)Gulrajani, Ahmed, Arjovsky, Dumoulin, and
  Courville}]{Gulrajani2017Improved}
Gulrajani I, Ahmed F, Arjovsky M, Dumoulin V, Courville AC (2017) Improved
  training of wasserstein gans. In: Guyon I, Luxburg UV, Bengio S, Wallach H,
  Fergus R, Vishwanathan S, Garnett R (eds) Advances in Neural Information
  Processing Systems 30, Curran Associates, Inc., p 5767–5777,
  \urlprefix\url{http://papers.nips.cc/paper/7159-improved-training-of-wasserstein-gans.pdf}

\bibitem[{Heusel et~al.(2017)Heusel, Ramsauer, Unterthiner, Nessler, and
  Hochreiter}]{Heusel2017GANs}
Heusel M, Ramsauer H, Unterthiner T, Nessler B, Hochreiter S (2017) Gans
  trained by a two time-scale update rule converge to a local nash equilibrium.
  In: Guyon I, Luxburg UV, Bengio S, Wallach H, Fergus R, Vishwanathan S,
  Garnett R (eds) Advances in Neural Information Processing Systems 30, Curran
  Associates, Inc., p 6626–6637

\bibitem[{Hindupur(2018)}]{Hindupur2018the-gan-zoo:}
Hindupur A (2018) the-gan-zoo: A list of all named GANs!
  \urlprefix\url{https://github.com/hindupuravinash/the-gan-zoo},
  original-date: 2017-04-14T16:45:24Z

\bibitem[{Hong et~al.(2019)Hong, Hwang, Yoo, and Yoon}]{Hong2019How}
Hong Y, Hwang U, Yoo J, Yoon S (2019) How generative adversarial networks and
  their variants work: An overview. ACM Computing Surveys 52(1):1--43,
  \doi{10.1145/3301282}

\bibitem[{Im et~al.(2016)Im, Kim, Jiang, and Memisevic}]{im2016generating}
Im DJ, Kim CD, Jiang H, Memisevic R (2016) Generating images with recurrent
  adversarial networks. arXiv preprint arXiv:160205110

\bibitem[{Isola et~al.(2017)Isola, Zhu, Zhou, and Efros}]{isola2017image}
Isola P, Zhu JY, Zhou T, Efros AA (2017) Image-to-image translation with
  conditional adversarial networks. In: Proceedings of the IEEE conference on
  computer vision and pattern recognition, pp 1125--1134

\bibitem[{Khrulkov and Oseledets(2018)}]{khrulkov2018geometry}
Khrulkov V, Oseledets I (2018) Geometry score: A method for comparing
  generative adversarial networks. In: International Conference on Machine
  Learning, PMLR, pp 2621--2629

\bibitem[{Kullback and Leibler(1951)}]{Kullback1951On}
Kullback S, Leibler RA (1951) On information and sufficiency. The Annals of
  Mathematical Statistics 22(1):79--86

\bibitem[{LeCun et~al.(2010)LeCun, Cortes, and Burges}]{LeCun2010MNIST}
LeCun Y, Cortes C, Burges CJ (2010) Mnist handwritten digit database

\bibitem[{Ledig et~al.(2017)Ledig, Theis, Huszar, Caballero, Cunningham,
  Acosta, Aitken, Tejani, Totz, Wang, and Shi}]{Ledig2017Photo-Realistic}
Ledig C, Theis L, Huszar F, Caballero J, Cunningham A, Acosta A, Aitken A,
  Tejani A, Totz J, Wang Z, Shi W (2017) Photo-realistic single image
  super-resolution using a generative adversarial network. 2017 IEEE Conference
  on Computer Vision and Pattern Recognition (CVPR), IEEE, Honolulu, HI, pp
  105--114, \doi{10.1109/CVPR.2017.19},
  \urlprefix\url{http://ieeexplore.ieee.org/document/8099502/}

\bibitem[{Lehmann and Romano(2006)}]{lehmann2006testing}
Lehmann EL, Romano JP (2006) Testing statistical hypotheses. Springer Science
  \& Business Media

\bibitem[{Li et~al.(2017)Li, Liang, Wei, Xu, Feng, and Yan}]{Li2017Perceptual}
Li J, Liang X, Wei Y, Xu T, Feng J, Yan S (2017) Perceptual generative
  adversarial networks for small object detection. 2017 IEEE Conference on
  Computer Vision and Pattern Recognition (CVPR), IEEE, Honolulu, HI, pp
  1951--1959, \doi{10.1109/CVPR.2017.211},
  \urlprefix\url{http://ieeexplore.ieee.org/document/8099694/}

\bibitem[{Lopez-Paz and Oquab(2017)}]{Lopez-Paz2017Revisiting}
Lopez-Paz D, Oquab M (2017) Revisiting classifier two-sample tests.
  \urlprefix\url{https://openreview.net/forum?id=SJkXfE5xx}

\bibitem[{Maaten and Hinton(2008)}]{Maaten2008Visualizing}
Maaten vdL, Hinton G (2008) Visualizing data using t-sne. Journal of Machine
  Learning Research 9(Nov):2579--2605

\bibitem[{Mao et~al.(2017)Mao, Li, Xie, Lau, Wang, and Smolley}]{Mao2017Least}
Mao X, Li Q, Xie H, Lau RY, Wang Z, Smolley SP (2017) Least squares generative
  adversarial networks. 2017 IEEE International Conference on Computer Vision
  (ICCV), pp 2813--2821, \doi{10.1109/ICCV.2017.304}, iSSN: 2380-7504

\bibitem[{Miyato et~al.(2018)Miyato, Kataoka, Koyama, and
  Yoshida}]{Miyato2018Spectral}
Miyato T, Kataoka T, Koyama M, Yoshida Y (2018) Spectral normalization for
  generative adversarial networks.
  \urlprefix\url{https://openreview.net/forum?id=B1QRgziT-}

\bibitem[{Pan et~al.(2019)Pan, Yu, Yi, Khan, Yuan, and Zheng}]{Pan2019Recent}
Pan Z, Yu W, Yi X, Khan A, Yuan F, Zheng Y (2019) Recent progress on generative
  adversarial networks (gans): A survey. IEEE Access 7:36322--36333,
  \doi{10.1109/ACCESS.2019.2905015}

\bibitem[{Radford et~al.(2016)Radford, Metz, and
  Chintala}]{Radford2016Unsupervised}
Radford A, Metz L, Chintala S (2016) Unsupervised representation learning with
  deep convolutional generative adversarial networks.
  \urlprefix\url{http://arxiv.org/abs/1511.06434}

\bibitem[{Ramdas et~al.(2017)Ramdas, Trillos, and
  Cuturi}]{Ramdas2017Wasserstein}
Ramdas A, Trillos NG, Cuturi M (2017) On wasserstein two-sample testing and
  related families of nonparametric tests. Entropy 19(2),
  \doi{10.3390/e19020047}

\bibitem[{Rüschendorf(1985)}]{ruschendorf_wasserstein_1985}
Rüschendorf L (1985) The {Wasserstein} distance and approximation theorems.
  Probability Theory and Related Fields 70(1):117--129,
  \doi{10.1007/BF00532240},
  \urlprefix\url{https://link.springer.com/article/10.1007/BF00532240}

\bibitem[{Salimans et~al.(2016)Salimans, Goodfellow, Zaremba, Cheung, Radford,
  Chen, and Chen}]{Salimans2016Improved}
Salimans T, Goodfellow I, Zaremba W, Cheung V, Radford A, Chen X, Chen X (2016)
  Improved techniques for training gans. In: Lee DD, Sugiyama M, Luxburg UV,
  Guyon I, Garnett R (eds) Advances in Neural Information Processing Systems
  29, Curran Associates, Inc., p 2234–2242,
  \urlprefix\url{http://papers.nips.cc/paper/6125-improved-techniques-for-training-gans.pdf}

\bibitem[{Santurkar et~al.(2018)Santurkar, Schmidt, and
  Madry}]{santurkar2018classification}
Santurkar S, Schmidt L, Madry A (2018) A classification-based study of
  covariate shift in gan distributions. In: International Conference on Machine
  Learning, PMLR, pp 4480--4489

\bibitem[{Snell et~al.(2017)Snell, Ridgeway, Liao, Roads, Mozer, and
  Zemel}]{snell2017learning}
Snell J, Ridgeway K, Liao R, Roads BD, Mozer MC, Zemel RS (2017) Learning to
  generate images with perceptual similarity metrics. In: 2017 IEEE
  International Conference on Image Processing (ICIP), IEEE, pp 4277--4281

\bibitem[{Szegedy et~al.(2016)Szegedy, Vanhoucke, Ioffe, Shlens, and
  Wojna}]{Szegedy2016Rethinking}
Szegedy C, Vanhoucke V, Ioffe S, Shlens J, Wojna Z (2016) Rethinking the
  inception architecture for computer vision. 2016 IEEE Conference on Computer
  Vision and Pattern Recognition (CVPR), IEEE, Las Vegas, NV, USA, pp
  2818--2826, \doi{10.1109/CVPR.2016.308},
  \urlprefix\url{http://ieeexplore.ieee.org/document/7780677/}

\bibitem[{Theis et~al.(2016)Theis, van~den Oord, and Bethge}]{Theis2016note}
Theis L, van~den Oord A, Bethge M (2016) A note on the evaluation of generative
  models. In: Bengio Y, LeCun Y (eds) 4th International Conference on Learning
  Representations, {ICLR} 2016, San Juan, Puerto Rico, May 2-4, 2016,
  Conference Track Proceedings, \urlprefix\url{http://arxiv.org/abs/1511.01844}

\bibitem[{Tolstikhin et~al.(2017)Tolstikhin, Gelly, Bousquet, Simon-Gabriel,
  and Sch{\"o}lkopf}]{tolstikhin2017adagan}
Tolstikhin IO, Gelly S, Bousquet O, Simon-Gabriel CJ, Sch{\"o}lkopf B (2017)
  Adagan: Boosting generative models. In: NIPS

\bibitem[{Wang et~al.(2018)Wang, Xu, Wang, and Tao}]{Wang2018Perceptual}
Wang C, Xu C, Wang C, Tao D (2018) Perceptual adversarial networks for
  image-to-image transformation. IEEE Transactions on Image Processing
  27(8):4066--4079, \doi{10.1109/TIP.2018.2836316}

\bibitem[{Wu et~al.(2019)Wu, Zheng, Zhang, and Huang}]{Wu2019GP-GAN:}
Wu H, Zheng S, Zhang J, Huang K (2019) Gp-gan: Towards realistic
  high-resolution image blending. the 27th ACM International Conference, ACM
  Press, Nice, France, pp 2487--2495, \doi{10.1145/3343031.3350944},
  \urlprefix\url{http://dl.acm.org/citation.cfm?doid=3343031.3350944}

\bibitem[{Yang et~al.(2017)Yang, Kannan, Batra, and Parikh}]{YangKBP17}
Yang J, Kannan A, Batra D, Parikh D (2017) {LR-GAN:} layered recursive
  generative adversarial networks for image generation. In: 5th International
  Conference on Learning Representations, {ICLR} 2017, Toulon, France, April
  24-26, 2017, Conference Track Proceedings, OpenReview.net,
  \urlprefix\url{https://openreview.net/forum?id=HJ1kmv9xx}

\bibitem[{Yi et~al.(2017)Yi, Zhang, Tan, and Gong}]{Yi2017DualGAN:}
Yi Z, Zhang H, Tan P, Gong M (2017) Dualgan: Unsupervised dual learning for
  image-to-image translation. 2017 IEEE International Conference on Computer
  Vision (ICCV), IEEE, Venice, pp 2868--2876, \doi{10.1109/ICCV.2017.310},
  \urlprefix\url{http://ieeexplore.ieee.org/document/8237572/}

\bibitem[{Zeng et~al.(2017)Zeng, Lu, and Borji}]{zeng2017statistics}
Zeng Y, Lu H, Borji A (2017) Statistics of deep generated images. arXiv
  preprint arXiv:170802688

\bibitem[{Zhang et~al.(2019)Zhang, Goodfellow, Metaxas, and
  Odena}]{Zhang2019Self-Attention}
Zhang H, Goodfellow I, Metaxas D, Odena A (2019) Self-attention generative
  adversarial networks. International Conference on Machine Learning, pp
  7354--7363, \urlprefix\url{http://proceedings.mlr.press/v97/zhang19d.html},
  iSSN: 1938-7228 section: Machine Learning

\bibitem[{Zhang et~al.(2018)Zhang, Song, and Qi}]{zhang2018decoupled}
Zhang Z, Song Y, Qi H (2018) Decoupled learning for conditional adversarial
  networks. In: 2018 IEEE Winter Conference on Applications of Computer Vision
  (WACV), IEEE, pp 700--708

\bibitem[{Zhou et~al.(2018)Zhou, Cai, Rong, Song, Ren, Zhang, Wang, and
  Yu}]{zhou2018activation}
Zhou Z, Cai H, Rong S, Song Y, Ren K, Zhang W, Wang J, Yu Y (2018) Activation
  maximization generative adversarial nets. In: International Conference on
  Learning Representations,
  \urlprefix\url{https://openreview.net/forum?id=HyyP33gAZ}

\end{thebibliography}


\end{document}